\def\eqref#1{equation~\ref{#1}}
\def\1{\bm{1}}
\def\vtheta{{\bm{\theta}}}
\def\vg{{\bm{g}}}
\def\vx{{\bm{x}}}
\def\mA{{\bm{A}}}
\def\mI{{\bm{I}}}
\def\mP{{\bm{P}}}
\DeclareMathAlphabet{\mathsfit}{\encodingdefault}{\sfdefault}{m}{sl}
\SetMathAlphabet{\mathsfit}{bold}{\encodingdefault}{\sfdefault}{bx}{n}
\newcommand{\E}{\mathbb{E}}
\newcommand{\R}{\mathbb{R}}
\theoremstyle{plain}
\newtheorem{theorem}{Theorem}
\newtheorem*{formalthm}{Theorem \ref{thm:fedpm_informal}}
\newtheorem{assumption}{Assumption}
\newtheorem{condition}{Condition}
\title{FedPM: Federated Learning Using Second-order Optimization\\ with Preconditioned Mixing of Local Parameters}
\author {
    Hiro Ishii\textsuperscript{\rm 1},
    Kenta Niwa\textsuperscript{\rm 2},
    Hiroshi Sawada\textsuperscript{\rm 2},
    Akinori Fujino\textsuperscript{\rm 2},
    Noboru Harada\textsuperscript{\rm 2},  
    Rio Yokota\textsuperscript{\rm 1}
}
\begin{document}

\maketitle
\begin{abstract}
We propose Federated Preconditioned Mixing (FedPM), a novel Federated Learning (FL) method that leverages second-order optimization. Prior methods--such as LocalNewton, LTDA, and FedSophia--have incorporated second-order optimization in FL by performing iterative local updates on clients and applying \textit{simple mixing} of local parameters on the server. However, these methods often suffer from drift in local preconditioners, which significantly disrupts the convergence of parameter training, particularly in heterogeneous data settings. To overcome this issue, we refine the update rules by decomposing the ideal second-order update--computed using globally preconditioned global gradients--into parameter mixing on the server and local parameter updates on clients. As a result, our FedPM introduces \textit{preconditioned mixing} of local parameters on the server, effectively mitigating drift in local preconditioners. 
We provide a theoretical convergence analysis demonstrating a superlinear rate for strongly convex objectives in scenarios involving a single local update.
To demonstrate the practical benefits of FedPM, we conducted extensive experiments. 
The results showed significant improvements with FedPM in the test accuracy compared to conventional methods incorporating simple mixing, fully leveraging the potential of second-order optimization.
\end{abstract}

\begin{links}
    \link{Code}{https://github.com/rioyokotalab/fedpm}
\end{links}

\section{Introduction}
Federated Learning (FL) is a distributed learning paradigm in which multiple clients collaboratively train a global parameter without sharing their local datasets \cite{kairouz2021advances, konevcny2016federatedb, konevcny2016federateda, mcmahan2017communication}, thereby offering benefits in privacy \cite{bonawitz2017practical,geyer2017differentially,ozdayi2021defending}, scalability \cite{bonawitz2019towards,li2020federated,oldenhof2023industry}, and communication efficiency \cite{lee2023layer,zang2024efficient}.

As summarized in Table \ref{tab:fl_methods_comparison}, FL methods can be broadly classified into four categories--FOGM, FOPM, SOGM, and SOPM--based on the combination of two key factors: (i) whether the update rule employs First-Order (FO) or Second-Order (SO) optimization methods, and (ii) whether the server-side processing relies on local Gradient Mixing (GM) or local Parameter Mixing (PM). FO methods primarily rely on the gradient--the first derivative of the loss function--to guide parameter updates. In contrast, SO methods also leverage the Hessian (the second derivative), which captures the curvature of the loss landscape. By utilizing this curvature information, SO methods can make more informed updates, which often accelerates the training process. Well-known Parallel SGD (PSGD;  also referred to as Minibatch SGD or Distributed SGD) \cite{collins2022fedavg,woodworth2020local}, which updates the global parameter through simple mixing of local gradients, falls into FOGM. In contrast, FedAvg \cite{mcmahan2017communication} is classified as FOPM, as the global parameter is obtained by averaging local updated parameters, where each client may perform multiple local updates before communication. Second-order counterparts--SOGM and SOPM--have also been explored. Notably, SOPM methods such as LocalNewton \cite{gupta2021localnewton}, Linear-Time Diagonal Approximation (LTDA) \cite{sen2023federated}, and FedSophia \cite{elbakary2024fed} aim to potentially accelerate and improve the efficiency of training by leveraging both (i) curvature information (i.e., Hessian) and (ii) multiple local updates on each client, thereby reducing communication overhead.

\begin{table*}[t]
\centering
\setlength{\tabcolsep}{1mm}
{\fontsize{9}{\baselineskip}\selectfont
\begin{tabular}{@{}llllccc@{}}
\toprule
\shortstack[l]{\textbf{Method} \\ \textbf{category}}
& \shortstack[l]{ \textbf{Representative methods} \\ \textbf{ } }
& \shortstack[l]{\textbf{Update} \\ \textbf{rules} }
& \shortstack[l]{\textbf{Server-side processing} \\ \textbf{ }}
& \shortstack[l]{\textbf{Global} \\ \textbf{FO opt.}}
& \shortstack[l]{\textbf{Global} \\ \textbf{SO opt.}}
& \shortstack[l]{\textbf{Multiple} \\ \textbf{local updates}}
\\
\midrule
\shortstack[l]{FOGM \\ {}} & \shortstack[l]{PSGD, Minibatch SGD \\ {} }
& \shortstack[l]{Eq. (\ref{eq:PSGD}) \\ {}}
& \shortstack[l]{Global parameter update using
\\ simple mixing of local gradients}
&\shortstack[l]{\checkmark \\ {}}& \shortstack[l]{-- \\ {}} & \shortstack[l]{-- \\ {}}
\\
\midrule
FOPM & FedAvg, FedProx, SCAFFOLD 
& Eq. (\ref{eq:FedAvg})
& Simple mixing of local parameters
&\checkmark &-- & \checkmark 
\\
\midrule
\shortstack[l]{
SOGM 
\\ {}}
& 
\shortstack[l]{
FedNL, FedNew, FedNS
\\ {}}
& \shortstack[l]{
Eq. (\ref{eq:SOGM}) \\ {}}
& \shortstack[l]{
Global parameter update using simple mixing\\
 of local gradients and preconditioning matrices
}
&\shortstack[l]{-- \\ {}} 
& \shortstack[l]{\checkmark \\ {}} 
& \shortstack[l]{-- \\ {}} 
\\
\midrule
SOPM
& LocalNewton, LTDA, FedSophia
& Eq. (\ref{eq:LocalNewton})
& Simple mixing of local parameters
& -- & --$^{(\ast 1)}$ & \checkmark
\\
\cline{2-7}
& 
\textbf{FedPM (ours)} 
&
Eq. (\ref{eq:fedpm_multi}) 
& 
\textbf{Preconditioned mixing} of local parameters
& -- & \checkmark$^{(\ast 2)}$ & \checkmark 
\\
\bottomrule
\end{tabular}}
\caption{
Comparison of representative FL methods. All methods can be classified into four categories based on the combination of two key factors: (i) optimization manner: either First-Order (FO) or Second-Order (SO) methods, and (ii) server-side processing: either Gradient Mixing (GM) or Parameter Mixing (PM). Global FO/SO opt. correspond to update rules (\ref{eq:GlobalUpdateFO}) and (\ref{eq:GlobalUpdateSO}), respectively. While details are explained in Sec. \ref{sec:related_works}, we reveal that existing SOPM methods cannot achieve global SO optimization due to the simple mixing of local parameters ($^{\ast 1}$). In contrast, our proposed method, FedPM, employs \textit{preconditioned mixing} of local parameters on the server, which enables global SO optimization with single local update setting ($^{\ast 2}$). }
\label{tab:fl_methods_comparison}
\end{table*}

In conventional SOPM methods, server-side processing typically consists of simple mixing of local parameters transmitted from local clients. However, this approach suffers from a fundamental limitation: it does not correspond to a true global second-order optimization. As detailed in Sec. \ref{sec:SOworks}, our preliminary analysis revealed that such simple mixing results in a global parameter update based on the sum of locally preconditioned local gradients, rather than a globally preconditioned global gradient. This mismatch introduces a critical bottleneck in federated learning, particularly under heterogeneous datasets. In heterogeneous data settings, the local curvature--captured by each client's Hessian--often poorly approximates the global curvature. 
Consequently, this reliance on mismatched local curvature information can lead to a drift in local preconditioners, which ultimately hinders convergence and prevents the full potential of second-order optimization from being realized, particularly in highly heterogeneous data settings.







To enable global parameter updates that fully leverage second-order information, we propose a novel SOPM method termed Federated Preconditioned Mixing (FedPM). As outlined above, conventional SOPM methods rely on simple mixing of local parameters on the server, yielding global parameter updates using locally preconditioned local gradients. In contrast, our approach derives client-server update rules by decomposing an ideal global second-order optimization using globally preconditioned global gradients. This formulation leads to a core innovation: replacing simple mixing on the server with \textit{preconditioned mixing} of local parameters--a curvature-aware local parameter mixing. By incorporating this preconditioned mixing, FedPM effectively captures global curvature information, leading to improved convergence even in heterogeneous data settings. As highlighted in Table \ref{tab:fl_methods_comparison}, FedPM is the only SOPM method that achieves global second-order optimization while allowing multiple local updates. Our contributions are summarised as follows:\\
%
%
%
\textbf{Derivation of curvature-aware SOPM method (FedPM)}: We derive FedPM, a novel FL algorithm featuring preconditioned mixing on the server, which aligns local updates with the global curvature, expecting robustness towards heterogeneous data settings (Sec. \ref{sec:fedpm}).\\
\textbf{Convergence analysis}: A theoretical convergence rate of FedPM is given for a limited condition (strongly convex objective with single local update). It demonstrates a superlinear convergence rate (Sec. \ref{subsec:analysis}).
\\
\textbf{Practical implementation}: To train large-scale models (e.g., non-convex deep learning models), an efficient preconditioner approximation method, FOOF \cite{benzing2022gradient}, is employed (Sec. \ref{subsec:approx}).\\
\textbf{Empirical superiority}: We empirically demonstrated that FedPM significantly outperformed existing FO and SO methods in convergence speed and best test accuracy, especially on heterogeneous data settings (Sec. \ref{sec4}).

\section{Related works}
\label{sec:related_works}

\subsection{First-Order (FO) distributed optimization}
We categorize FO distributed optimization methods into two categories: FOGM and FOPM. 

FOGM methods, including Parallel SGD (PSGD) \cite{collins2022fedavg,woodworth2020local}, update the global parameter $\vtheta \in \mathbb{R}^d$ by aggregating local gradients from each client in every communication round, as

\begin{align}
&\text{Client: } \vg_i^{(t)} = \nabla f_{i}(\vtheta^{(t)}) \quad \quad \quad \quad \forall i \in \{1,\ldots,N\},
\label{eq:PSGD} \\
&\text{Server: } \vtheta^{(t+1)} = \vtheta^{(t)} - \frac{\eta}{N} \sum_{i=1}^N \vg_i^{(t)},
\notag
\end{align}
where the initial parameter $\vtheta^{(0)}$ is given, $f_{i}$ is a twice-differentiable local loss function, $\nabla f_{i}(\vtheta^{(t)})$ represents the local gradient computed using local datasets, $N$ is the number of clients, $\eta$ is the learning rate, and $t \in \{ 0,\ldots, T-1 \}$ denotes the communication round index.
This simple method, however, incurs high communication costs. 

To improve communication-efficiency, FOPM methods like FedAvg \cite{mcmahan2017communication}, FedProx \cite{li2020federated}, and SCAFFOLD \cite{karimireddy2020scaffold} allow clients to perform multiple local updates ($K \geq 1$) and only transmit the resulting parameters:
\begin{align}
&\text{Client: } 
\vtheta_i^{(t,0)} = \vtheta^{(t)}, 
\hspace{46pt} \forall i \in \{1,\ldots,N\},
\label{eq:FedAvg}  \\
&\vtheta_i^{(t,k+1)} = \vtheta_i^{(t,k)} - \eta \nabla f_{i}(\vtheta_{i}^{(t,k)}), \hspace{5pt} \forall k \in \{0,\ldots,K-1\},
\notag
\\
&\text{Server: } \vtheta^{(t+1)} = \underbrace{\frac{1}{N}\sum_{i=1}^N \vtheta_i^{(t,K)}}_{\textrm{simple mixing}}.
\notag
\end{align}

In the case of a single local update ($K=1$), the client-server update rules in FOGM and FOPM are combined, resulting in an equivalent global update for both FOGM and FOPM, as follows:
\begin{align}
\vtheta^{(t+1)} = \vtheta^{(t)} - \eta \Biggl( \underbrace{\frac{1}{N} \sum_{i=1}^N  
 \nabla f_{i}(\vtheta^{(t)})}_{\text{global gradient}} \Biggr).
\label{eq:GlobalUpdateFO} 
\end{align}

Equation (\ref{eq:GlobalUpdateFO}) represents an ideal FO optimization, where the global parameter is updated using global gradient, which is feasible when all datasets are centrally aggregated.

\subsection{Second-Order (SO) distributed optimization}
\label{sec:SOworks}

Similar to FO distributed optimization, SO distributed optimization methods can be categorized into SOGM and SOPM.

SOGM methods including FedNL \cite{safaryan2021fednl}, FedNew \cite{elgabli2022fednew}, and FedNS \cite{li2024fedns} are communication-intensive, requiring clients to transmit local gradients and Hessians ($\mP_i^{(t)}$) every round:
\begin{align}
&\text{Client: } \vg_i^{(t)} = \nabla f_{i}(\vtheta^{(t)}), \hspace{10pt}
\mP_i^{(t)}= \nabla^{2} f_{i}(\vtheta^{(t)}), \label{eq:SOGM}\\
&\hspace{150pt} \forall i \in \{1,\ldots,N\}, \notag\\
&\text{Server: } \vtheta^{(t+1)} = \vtheta^{(t)} - \eta \left(\frac{1}{N} \sum_{i=1}^N \mP_i^{(t)}\right)^{\hspace{-5pt}-1}\hspace{-7pt} \left( \frac{1}{N} \sum_{i=1}^N \vg_i^{(t)} \right). \notag
\end{align}

The communication-efficient alternative, SOPM, includes methods such as LocalNewton \cite{gupta2021localnewton}, LTDA \cite{sen2023federated}, and FedSophia \cite{elbakary2024fed}, where each client performs local SO updates, while the server simply mixes the local parameters, as
\begin{align}
&\text{Client: } 
\vtheta_i^{(t,0)} = \vtheta^{(t)}, 
\label{eq:LocalNewton}
\\
&\hspace{28pt} \vtheta_i^{(t,k+1)} = \vtheta_i^{(t,k)} \hspace{-5pt}- \eta \left( \nabla^{2} f_{i}(\vtheta_{i}^{(t,k)}) \right)^{\hspace{-3pt}-1} \hspace{-5pt} \nabla f_{i}(\vtheta_{i}^{(t,k)}), \notag
\\
&\hspace{70pt} \forall i \in \{1,\ldots,N\}, \forall k \in \{0,\cdots,K-1\},
\notag
\\
&\text{Server: } \vtheta^{(t+1)} = \underbrace{ \frac{1}{N}\sum_{i=1}^N \vtheta_i^{(t,K)}}_{\textrm{simple mixing}}. 
\notag
\end{align}

As shown in the following analysis, SOGM and an instance (\ref{eq:LocalNewton}) of SOPM are not equivalent, in contrast to the equivalence observed in their FO counterparts. By combining client-server update rules in (\ref{eq:SOGM}) under a single-local update setting ($K=1$), the SOGM updates simplifies to
\begin{align}
\fontsize{8.9}{\baselineskip}\selectfont
\vtheta^{(t+1)} \hspace{-2pt} = \vtheta^{(t)} \hspace{-2pt} - \eta \underbrace{ \Biggl( \frac{1}{N} \sum_{i=1}^N \nabla^2 f_i(\vtheta^{(t)}) \Biggr)^{\hspace{-3pt}-1} }_{\text{global preconditioner}}
\hspace{-2pt}\Biggl( \underbrace{\frac{1}{N} \sum_{i=1}^N \nabla f_{i}(\vtheta^{(t)})}_{\text{global gradient}} \Biggr).
\label{eq:GlobalUpdateSO} 
\end{align}
Eq. (\ref{eq:GlobalUpdateSO}) is an ideal SO optimization, where the global parameter is updated using a globally preconditioned global gradient, which is feasible when all datasets are centrally aggregated. In contrast, the SOPM update in (\ref{eq:LocalNewton}) under $K=1$ can be simplified as follows:
\begin{equation}
\vtheta^{(t+1)} = \vtheta^{(t)} - \frac{\eta}{N} \sum_{i=1}^N \underbrace{\left(\nabla^2 f_i(\vtheta^{(t)})\right)^{-1}}_{\text{local preconditioner}} \underbrace{\nabla f_{i}(\vtheta^{(t)})}_{\text{local gradient}}.
\label{eq:SOPM_K=1}
\end{equation}
In (\ref{eq:SOPM_K=1}), the global parameter is updated using the average of locally preconditioned local gradients. This differs from an ideal SO optimization in (\ref{eq:GlobalUpdateSO}), which employs a globally preconditioned global gradient. 
 
We showed that conventional SOPM (\ref{eq:LocalNewton}) captures local curvatures instead of global curvature. This mismatch becomes problematic in FL, where data heterogeneity causes local preconditioners to be poor estimates of the global one, leading to suboptimal convergence. 

\section{Proposed method}
\label{sec3}

To capture global curvature--rather than local curvatures--for faster and more stable global parameter optimization, we propose FedPM, a novel SOPM method. 
%
%
Our goal is to design an SOPM method in which the combination of client-server update rules yields the update in (\ref{eq:GlobalUpdateSO}), which captures global curvature. To achieve this, our FedPM is derived by decomposing (\ref{eq:GlobalUpdateSO}) into client-server update rules, as detailed in Sec. \ref{sec:fedpm}. A theoretical convergence analysis of FedPM under a certain condition is presented in Sec. \ref{subsec:analysis}. 
To address computational overheads, an efficient preconditioner approximation technique is introduced in Sec. \ref{subsec:approx}, enabling the use of large-scale models (e.g., Deep Neural Networks (DNNs)) as evaluated in Sec.~\ref{sec4}. Further investigation regarding differences between our FedPM and related works is highlighted in Appendix \ref{secap:distinction}.

\subsection{Derivation of FedPM}
\label{sec:fedpm}
To address the issues of conventional SOPM methods discussed in Sec. \ref{sec:SOworks}, our core idea is to reformulate the global SO optimization in (\ref{eq:GlobalUpdateSO}) into local parameter updates on clients and parameter mixing on the server. For this goal, (\ref{eq:GlobalUpdateSO}) is reformulated as follows:
\begin{align}
&\vtheta^{(t+1)}=\vtheta^{(t)}-\eta (\mP^{(t)} )^{-1} \vg^{(t)}
\label{eq:fedpm_derivation} \\
&\mbox{\fontsize{9.5}{\baselineskip}\selectfont $=(\mP^{(t)})^{-1} \mP^{(t)}\vtheta^{(t)}-\eta (\mP^{(t)} )^{-1}\displaystyle\frac{1}{N}\sum_{i=1}^N\mP_i^{(t)} (\mP_i^{(t)})^{-1}\vg_i^{(t)}$}\notag \\
&\mbox{\fontsize{9}{\baselineskip}\selectfont $=\displaystyle\frac{1}{N}\sum_{i=1}^N\left((\mP^{(t)})^{-1} \mP_i^{(t)}\vtheta^{(t)}-\eta (\mP^{(t)} )^{-1}\mP_i^{(t)} (\mP_i^{(t)})^{-1}\vg_i^{(t)}\right)$}\notag \\
&=\frac{1}{N}\sum_{i=1}^N (\mP^{(t)})^{-1}\mP_i^{(t)}\left(\vtheta^{(t)}-\eta (\mP_i^{(t)})^{-1}\vg_i^{(t)}\right), \notag
\end{align}
where $\vg_i^{(t)}=\nabla f_{i}(\vtheta^{(t)}), \vg^{(t)}=\frac{1}{N}\sum_{i=1}^N\vg_i^{(t)},\mP_i^{(t)}=\nabla^{2} f_{i}(\vtheta^{(t)}),\mP^{(t)}=\frac{1}{N}\sum_{i=1}^N\mP_i^{(t)}$.
A straightforward decomposition of this leads to the update rules of our FedPM, as follows:
\begin{align}
&\textbf{[FedPM with single local update $(K=1)$]} 
\notag \\
&\text{Client: } 
\vtheta_i^{(t)} = \vtheta^{(t)}, 
\mP_i^{(t)} = \nabla^2 f_i(\vtheta^{(t)}), 
\hspace{5pt} \forall i \in \{1,\ldots,N\}, \label{eq:fedpm_single}\\
&\hspace{28pt} \vtheta_i^{(t+1)} = \vtheta_{i}^{(t)} - \eta (\mP_i^{(t)})^{-1} \nabla f_i(\vtheta^{(t)}), \notag\\
&\text{Server: } \mP^{(t)} = \frac{1}{N} \sum_{i=1}^N \mP_i^{(t)}, \notag\\
&\hspace{27pt} \vtheta^{(t+1)} = \underbrace{\frac{1}{N} \sum_{i=1}^N (\mP^{(t)})^{-1} \mP_i^{(t)} \vtheta_i^{(t+1)}}_\text{preconditioned mixing}, \notag
\end{align}
where local parameter mixing on the server is referred to as \textit{preconditioned mixing}. A natural extension of (\ref{eq:fedpm_single}), allowing multiple local parameter updates, results in the following:
\begin{align}
&\textbf{[FedPM with multiple local updates $(K > 1)$]} 
\notag \\
&\text{Client: } 
\vtheta_i^{(t,0)} = \vtheta^{(t)}, 
\hspace{50pt} \forall i \in \{1,\ldots,N\}, \label{eq:fedpm_multi}\\
& \hspace{26pt} \mP_i^{(t,k)} = \nabla^2 f_i(\vtheta_{i}^{(t,k)}), 
\hspace{15pt}
\forall k \in \{0,\cdots,K-1\},
\notag \\
&\hspace{26pt} \vtheta_i^{(t, k+1)} = \vtheta_{i}^{(t, k)} - \eta ( {\mP_i^{(t, k)}} )^{-1} \nabla f_i(\vtheta^{(t, k)}), 
\notag\\
&\text{Server: } \mP^{(t)} = \frac{1}{N} \sum_{i=1}^N \mP_i^{(t, K-1)}, \notag\\
&\hspace{30pt} \vtheta^{(t+1)} = \underbrace{\frac{1}{N} \sum_{i=1}^N (\mP^{(t)})^{-1} \mP_i^{(t,K-1)} \vtheta_i^{(t,K)}}_\text{preconditioned mixing}.\notag
\end{align}

The key feature of FedPM lies in its use of preconditioned mixing, which enables effective capture of global curvature information. However, unlike conventional SOPM methods (e.g., LocalNewton, LTDA, and FedSophia), FedPM incurs additional communication costs as it requires transmitting not only local parameters but also local preconditioners. Nevertheless, even with this additional communication overhead, FedPM is preferable in heterogeneous data settings. By capturing global curvature--rather than relying on statistically biased local curvatures--FedPM can achieve faster and more stable convergence even under heterogeneous data distributions.




\subsection{Theoretical convergence analysis}\label{subsec:analysis}
We provide a convergence analysis of FedPM under a limited condition. In line with most convergence proofs for second-order methods \cite{safaryan2021fednl, elgabli2022fednew, li2024fedns}, our analysis assumes the loss function is strongly convex. Furthermore, the analysis is confined to the case of a single local update ($K=1$), as the drift in local preconditioners during multiple updates complicates a rigorous convergence analysis. The analysis also disregards stochastic noise by assuming full local gradients and Hessians are available.
Due to space limitations, we present only informal statements here; formal statements and proofs are provided in Appendix \ref{secap:thproof}. 
\begin{theorem}[(Informal) Convergence rate of FedPM under single local update]
\label{thm:fedpm_informal}
Under the assumptions of strong convexity and Hessian smoothness, and given an initial parameter condition that implicitly assumes sufficiently close to the optimal solution, the FedPM algorithm with a single local update ($K=1$) as defined in (\ref{eq:fedpm_single}) achieves a superlinear convergence rate for the global parameter. 
\end{theorem}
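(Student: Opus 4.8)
The plan is to exploit the exact equivalence already established in the derivation (\ref{eq:fedpm_derivation}): for $K=1$, the preconditioned mixing on the server reconstructs the ideal second-order update (\ref{eq:GlobalUpdateSO}) with \emph{no} approximation error. First I would observe that, writing $f = \frac{1}{N}\sum_{i=1}^N f_i$ for the global objective, the update (\ref{eq:GlobalUpdateSO}) is precisely the (damped) Newton iteration $\vtheta^{(t+1)} = \vtheta^{(t)} - \eta (\nabla^2 f(\vtheta^{(t)}))^{-1}\nabla f(\vtheta^{(t)})$, because $\vg^{(t)} = \nabla f(\vtheta^{(t)})$ and $\mP^{(t)} = \nabla^2 f(\vtheta^{(t)})$. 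Thus the entire federated structure collapses, for $K=1$, to centralized Newton, and the task reduces to the classical local-convergence analysis of Newton's method. I would then record the formal hypotheses: each $f_i$ is $\mu$-strongly convex (so every local Hessian and their average $\mP^{(t)}$ is invertible with $\nabla^2 f \succeq \mu \mI$) and has $L$-Lipschitz-continuous Hessian, whence $f$ inherits $\mu$-strong convexity and an $L$-Lipschitz Hessian by averaging. I would fix the step size $\eta = 1$ so that the linear error term vanishes and a genuinely superlinear rate is attainable.

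Next I would set up the standard error recursion around the unique minimizer $\vtheta^*$ (where $\nabla f(\vtheta^*)=0$). Subtracting $\vtheta^*$ from the Newton step and inserting the integral form of the gradient, $\nabla f(\vtheta^{(t)}) = \int_0^1 \nabla^2 f(\vtheta^* + s(\vtheta^{(t)} - \vtheta^*))\,(\vtheta^{(t)} - \vtheta^*)\,ds$, yields
\[
\vtheta^{(t+1)} - \vtheta^* = (\nabla^2 f(\vtheta^{(t)}))^{-1} \int_0^1 \bigl[ \nabla^2 f(\vtheta^{(t)}) - \nabla^2 f(\vtheta^* + s(\vtheta^{(t)}-\vtheta^*)) \bigr](\vtheta^{(t)}-\vtheta^*)\,ds.
\]
I would then bound the two factors separately: strong convexity gives $\|(\nabla^2 f(\vtheta^{(t)}))^{-1}\| \le 1/\mu$, while $L$-Lipschitzness of the Hessian bounds the integrand in norm by $L(1-s)\|\vtheta^{(t)}-\vtheta^*\|$, whose integral over $s\in[0,1]$ contributes a factor $L/2$.

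Combining these gives the quadratic contraction $\|\vtheta^{(t+1)} - \vtheta^*\| \le \frac{L}{2\mu}\|\vtheta^{(t)}-\vtheta^*\|^2$. I would make the initial-parameter condition precise: requiring $\|\vtheta^{(0)} - \vtheta^*\| < 2\mu/L$ forces the contraction factor $\frac{L}{2\mu}\|\vtheta^{(t)}-\vtheta^*\|$ to stay below one and to shrink at every step, so the iterates converge to $\vtheta^*$ with ratio $\|\vtheta^{(t+1)}-\vtheta^*\|/\|\vtheta^{(t)}-\vtheta^*\| \to 0$, i.e.\ superlinearly (indeed quadratically). This is exactly the ``sufficiently close to the optimal solution'' hypothesis of the statement, defining the region of attraction. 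The main obstacle is not the convergence algebra--which is the textbook Newton argument--but justifying that the reduction to centralized Newton is \emph{exact}: I would verify that the preconditioned-mixing identity in (\ref{eq:fedpm_derivation}) is a genuine algebraic identity, in which each factor $(\mP^{(t)})^{-1}\mP_i^{(t)}$ telescopes through the sum so that, unlike the simple-mixing update (\ref{eq:SOPM_K=1}), no local-preconditioner drift term survives when $K=1$. It is precisely this exactness that breaks down for $K>1$, which is why the analysis is confined to a single local update.
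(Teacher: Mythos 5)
Your proposal is correct, but it takes a genuinely different route from the paper. You reduce FedPM with $K=1$ to centralized Newton via the exact telescoping identity in (\ref{eq:fedpm_derivation}) (which is indeed a pure algebraic identity: $\frac{1}{N}\sum_i (\mP^{(t)})^{-1}\mP_i^{(t)}\bigl(\vtheta^{(t)} - \eta(\mP_i^{(t)})^{-1}\vg_i^{(t)}\bigr) = \vtheta^{(t)} - \eta(\mP^{(t)})^{-1}\vg^{(t)}$, with no drift term), and then run the textbook local-convergence argument with the integral remainder, obtaining the quadratic contraction $\|\vtheta^{(t+1)}-\vtheta^*\| \le \frac{L}{2\mu}\|\vtheta^{(t)}-\vtheta^*\|^2$ inside the basin $\|\vtheta^{(0)}-\vtheta^*\| < 2\mu/L$. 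The paper performs the same reduction but then follows the FedNL-style Lyapunov analysis: it tracks the Frobenius-norm Hessian error ${\mathcal H}^{(t)}=\frac{1}{n}\sum_i\|\nabla^2 f_i(\vtheta^{(t)})-\nabla^2 f_i(\vtheta^*)\|_{\rm F}^2$ as a separate quantity, proves a linear rate $\E[\Phi^{(t)}]\le(7/12)^t\Phi^{(0)}$ for the Lyapunov function $\Phi^{(t)}={\mathcal H}^{(t)}+6L_F^2\|\vtheta^{(t)}-\vtheta^*\|^2$, and deduces superlinearity as a geometric decay of the expected squared-error ratio. Your argument is more elementary, needs weaker hypotheses (only spectral-norm Hessian Lipschitzness and strong convexity of the average $f$; the paper additionally invokes Frobenius-norm Lipschitzness with constant $L_F$ because its Lyapunov function lives in that norm), and delivers a strictly stronger conclusion (quadratic convergence, i.e.\ a contraction ratio shrinking doubly exponentially rather than like $(7/12)^t$). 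What the paper's heavier machinery buys is extensibility: the explicit Hessian-error term ${\mathcal H}^{(t)}$ and the expectations are exactly the slots that absorb Hessian compression, stale or learned preconditioners, and stochasticity in the FedNL framework the authors build on, whereas your clean reduction to exact Newton would break as soon as the preconditioner is approximated or $K>1$ introduces drift---a limitation you correctly identify at the end of your proposal.
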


\noindent
\textbf{Discussion of Theorem \ref{thm:fedpm_informal}}: 
This theoretical result demonstrates the potential of FedPM to achieve a superlinear convergence rate, providing a theoretical validation of FedPM's faster and more stable convergence. 
Our analysis builds upon the strategy used for FedNL, since FedPM with a single local update ($K=1$) follows the same global parameter update rule given in (\ref{eq:GlobalUpdateSO}). A subtle but important distinction lies in the use of the most recent Hessian for preconditioning, which is a more direct application of Newton's method than the stale Hessian used in the original FedNL's proof in \cite{safaryan2021fednl}.
Based on the constraints outlined above, our numerical experiments in Sec. \ref{sec4} are designed with two objectives: Test 1 in Sec. \ref{subsec:test1} aims to empirically validate Theorem \ref{thm:fedpm_informal}, using a strongly convex loss function with full gradient under single local update. In contrast, Test 2 in Sec. \ref{subsec:test2} focuses on more practical scenarios, evaluating FedPM on non-convex DNN model training. 


%

\subsection{Preconditioner approximation method}\label{subsec:approx}

To apply FedPM to DNN models, where direct computation of the full Hessian matrix is impractical, we employ an efficient preconditioner approximation. A popular strategy is to leverage the Fisher Information Matrix (FIM) to approximate the Hessian \cite{amari1998natural,martens2020new}. Under common loss functions like cross-entropy, the FIM is equivalent to the Generalized Gauss-Newton (GGN) matrix, a widely used Hessian approximation in deep learning \cite{martens2020new, schraudolph2002fast}.
\begin{table}[t]
  \centering
  \setlength{\tabcolsep}{1mm}
  {\fontsize{9}{\baselineskip}\selectfont
  \begin{tabular}{|c|c|c|c|}
    \hline
    Method & Construction & Inverse & Communication \\ \hline
    FedPM & $\mathcal{O}(Md^2)$ & $\mathcal{O}(d^3)$ & $\mathcal{O}(d^2)$ \\ \hline
    FedPM w/ FOOF & $\mathcal{O}(Md)$ & $\mathcal{O}(d\sqrt{d/L})$ & $\mathcal{O}(d)$\\\hline
  \end{tabular}
  }
  \caption{Comparison of computation and communication costs between FedPM (with full Hessian) and FedPM with FOOF approximation.}
  \label{tab:comp_approx}
\end{table}

In this paper, we employ FOOF \cite{benzing2022gradient} as an FIM approximation method. FOOF is an effective FIM approximation method, particularly beneficial for DNNs holding a large number of parameters. It simplifies the FIM computation by approximating it with block-diagonal matrices, where each block corresponds to one of $L$ layers of the DNN. To further reduce computational complexity, each block is approximated as an uncentered covariance matrix of inputs of the layer. The update rule for FOOF in a specific layer $l$ of a DNN at the $i$-th client is given by:
\begin{equation}\label{eq:foof_local}
\vtheta_{i,l}^{(t,k+1)}=\vtheta_{i,l}^{(t,k)} - \eta  \operatorname{vec}\left(\left(\mA^{(t,k)}_{i,l}\right)^{-1} \operatorname{vec}^{-1} \left( \vg_{i,l}^{(t,k)} \right) \right),
\end{equation}
where $\vtheta_{i,l}^{(t,k)}$ and $\vg_{i,l}^{(t,k)}$ represent the parameter and the gradient of the $l$-th layer of $i$-th client at $k$-th local iteration during the $t$-th communication round, respectively, The matrix $\mA_{i,l}^{(t,k)}$ represents the uncentered covariance matrix of the inputs (FOOF matrix) of the $l$-th layer. The operators $\operatorname{vec}$ and $\operatorname{vec}^{-1}$ denote the operations to vectorize a matrix and revert a vector back to a matrix form, respectively.
FOOF is particularly useful in scenarios where balancing
computational cost and performance is crucial. The preconditioned mixing at the center server for the FOOF approximation is given by:
\begin{align}\label{eq:foof_mixing}
\vtheta_l^{(t+1,0)} = \operatorname{vec}\Biggl(&\left(\frac{1}{N}\sum_{i=1}^N \mA_{i,l}^{(t,K-1)}\right)^{-1} \\
&\left(\frac{1}{N}\sum_{i=1}^N\mA_{i,l}^{(t,K-1)}\operatorname{vec}^{-1}\left(\vtheta_{i,l}^{(t,K)}\right)\right)\Biggr).\notag
\end{align}

\begin{algorithm}[t]
\caption{Federated Preconditioned Mixing (FedPM)}
\label{alg:fedpm}
\begin{algorithmic}[1]
    \STATE {\bfseries Server Initialization:} Global parameter $\vtheta^{(0)}$
    \FOR{$t=0$ {\bfseries to} $T-1$}
        \STATE Server sends global parameter $\vtheta^{(t)}$ to all clients.
        \FOR{$i=1$ {\bfseries to} $N$ {\bfseries in parallel}}
            \STATE $\vtheta_i^{(t,0)} \leftarrow \vtheta^{(t)}$
            \FOR{$k=0$ {\bfseries to} $K-1$}
                \IF{using FOOF approximation}
                    \FOR{$l=1$ {\bfseries to} $L$}
                        \STATE Update $\vtheta_{i,l}^{(t,k+1)}$ per Eq. (\ref{eq:foof_local}).
                    \ENDFOR
                \ELSE
                    \STATE $\mP_i^{(t,k)} \leftarrow \nabla^2 f_i(\vtheta_i^{(t,k)})$.
                    \STATE $\vtheta_i^{(t,k+1)} \leftarrow \vtheta_i^{(t,k)} - \eta (\mP_i^{(t,k)})^{-1} \nabla f_i(\vtheta_i^{(t,k)})$.
                \ENDIF
            \ENDFOR
            \STATE Client $i$ sends $\vtheta_i^{(t,K)}$ and preconditioner(s) (i.e., $\mP_i^{(t,K-1)}$ or $\{\mA_{i,l}^{(t,K-1)}\}_{l=1}^L$) to the server.
        \ENDFOR
        
        \STATE \textbf{Server aggregates:}
        \IF{using FOOF approximation}
            \FOR{$l=1$ {\bfseries to} $L$}
                 \STATE Update $\vtheta_{l}^{(t+1)}$ per Eq. (\ref{eq:foof_mixing}).
            \ENDFOR
        \ELSE
            \STATE $\mP^{(t)} \leftarrow \frac{1}{N}\sum_{i=1}^N \mP_{i}^{(t,K-1)}$.
            \STATE $\vtheta^{(t+1)} \leftarrow \frac{1}{N}\sum_{i=1}^N (\mP^{(t)})^{-1} \mP_{i}^{(t,K-1)} \vtheta_i^{(t,K)}$.
        \ENDIF
    \ENDFOR
\end{algorithmic}
\end{algorithm}

The computational and communication costs of the FedPM with the FOOF approximation, compared to FedPM without any preconditioner approximation, are summarized in Table \ref{tab:comp_approx}. In this comparison, the model architecture is limited to the $L$-layer fully connected neural network with each layer having consistent input and output dimensions of $\sqrt{d/L}$, and $M$ denotes the number of data samples used to compute the matrix. The table distinguishes the time complexities for constructing and inverting matrices, and the space complexity for storing and transmitting FOOF matrices.
Compared to the FedPM using full preconditioning matrices without approximation, the computational and communication costs are drastically reduced. The general procedure for FedPM, which accommodates both the full Hessian and its FOOF approximation, is detailed in Algorithm \ref{alg:fedpm}. The effectiveness of this approximation in FedPM is empirically evaluated in the experiments presented in Sec. \ref{subsec:test2}.

\section{Numerical experiments}
\label{sec4}

To empirically examine the performance of the proposed FedPM, we prepared two tests: (Test 1) a strongly convex model in Sec.~\ref{subsec:test1} and (Test 2) non-convex DNN models in Sec.~\ref{subsec:test2}. Additional experiments, including an ablation study on FedPM and a more realistic FL setting, are summarized in the Appendix \ref{secap:addexp}.

\subsection{Test 1 using a strongly convex model}
\label{subsec:test1}
\subsubsection{Experimental setups}
\noindent
\begin{figure}[t]
\centering
\includegraphics[width=\columnwidth]{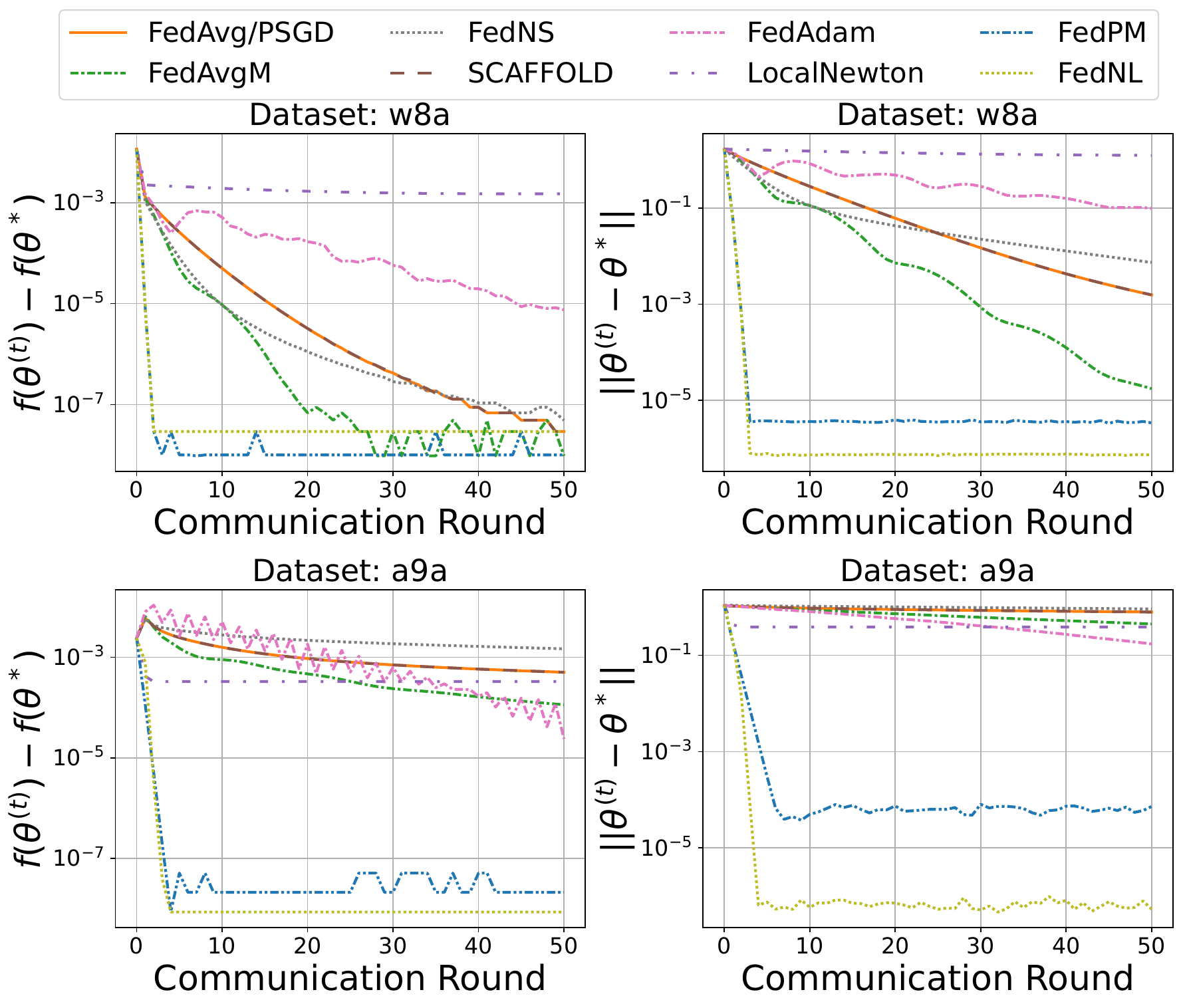}
\captionsetup{width=\columnwidth}
\caption{Convergence curves for Test 1 ($K=1$) using global parameter on w8a (top) and a9a (bottom). The left column displays the difference in function output at each round and at the optimal solution. The right column depicts the L2 norm of the difference between the parameter at each round and the optimal solution.}
\label{fig:test1}
\end{figure}


\begin{figure*}[t]
    \centering
    \includegraphics[width=0.75\paperwidth]{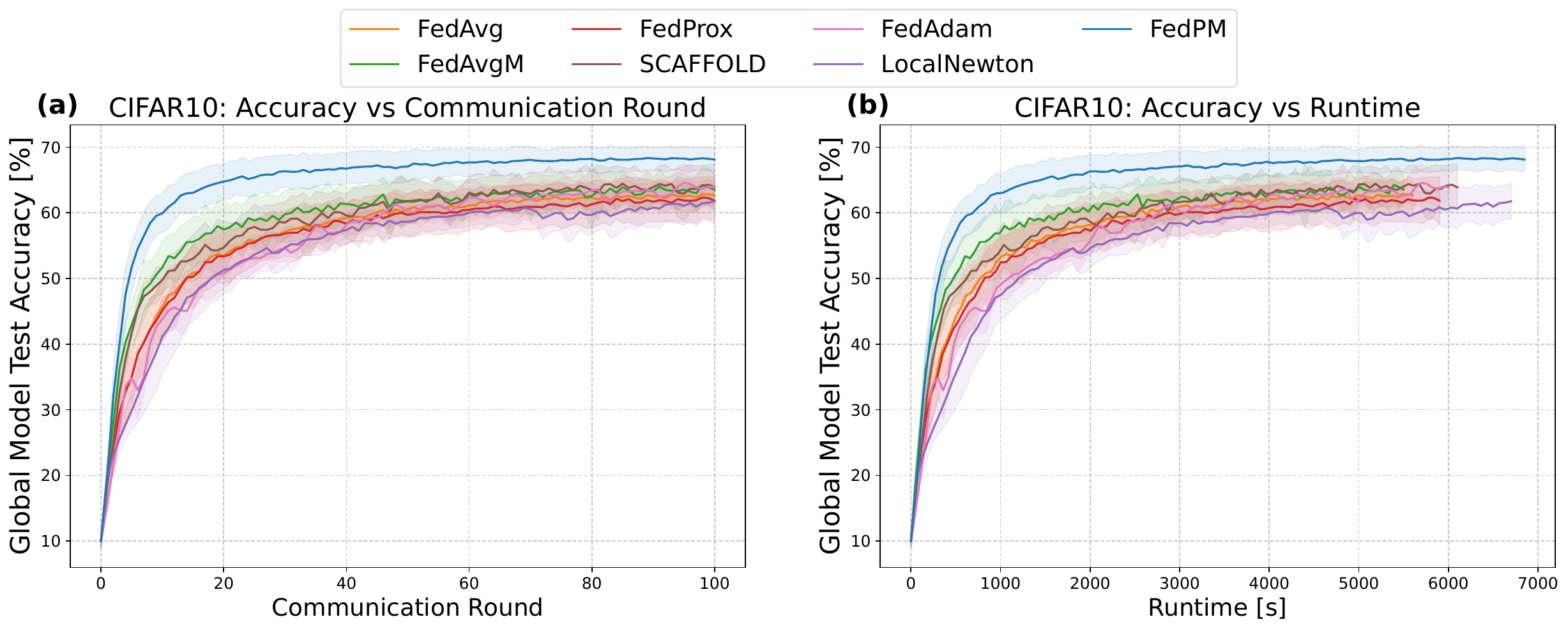}
    \caption{Convergence curves for Test 2 using test accuracy for the global parameter on CIFAR10 classification with heterogeneity level of $\alpha=0.1$ and 5 local epochs. (a) depicts test accuracy against communication rounds, whereas (b) shows test accuracy against runtime including communication overhead. The computation resource we used is summarized in Appendix \ref{secap:settings}. The shaded area depicts one standard deviation of results across three different random seeds.}
    \label{fig:curve}
\end{figure*}

\noindent
\textbf{Datasets, model, and network configuration}:
As a toy experiment designed to satisfy the assumptions used in the theoretical analysis in Sec. \ref{subsec:analysis}, we employed a strongly convex model: logistic regression with squared $L_{2}$ regularization. The loss function is defined as $f(\vtheta) = \frac{1}{N} \sum_{i=1}^N f_i(\vtheta)$, where
$f_i(\vtheta) = \frac{1}{M} \sum_{j=1}^{M} \log\left(1 + \exp(-y_{ij}\vx_{ij}^\top\vtheta)\right) + \frac{\lambda}{2}\|\vtheta\|^2$
with $\{\vx_{ij},y_{ij}\}_{j\in[M]}$ representing data samples held by the $i$-th client. We used the w8a and a9a datasets from LibSVM \cite{chang2011libsvm}. The network configuration involves a central server with $N=142$ clients for w8a and $80$ clients for a9a communicating with single local update $(K=1)$ for $T=50$ communication rounds. The dataset was evenly and homogeneously distributed, with each client assigned $350$ data samples for w8a $(d=300)$ and $407$ data samples for a9a $(d=123)$. 

\noindent
\textbf{Comparison methods}:
We prepared seven comparison methods for evaluation, along with our proposed method:
FedAvg/PSGD \cite{collins2022fedavg,mcmahan2017communication}, 
FedAvgM \cite{hsu2019measuring},
SCAFFOLD \cite{karimireddy2020scaffold}, 
FedAdam \cite{reddi2021adaptive},
FedNS \cite{li2024fedns},
FedNL \cite{safaryan2021fednl},
LocalNewton \cite{gupta2021localnewton},
and proposed
FedPM.
We excluded SOPM methods that use diagonal approximations for the Hessian, such as LTDA and FedSophia, as they are designed for scenarios where full Hessian computation is intractable. Since computing the full Hessian is feasible in this experiment, using such approximations would be suboptimal.
In Test 1, stochastic gradient computation and FOOF-based preconditioner approximation were not applied to follow the convergence analysis in Sec. \ref{subsec:analysis}.

\noindent
\textbf{Hyperparameter settings}:
Hyperparameter tuning was conducted to find the best learning rate for each method. Details of the settings are described in Appendix \ref{secap:settings}.
\subsubsection{Experimental results}
Figure \ref{fig:test1} illustrates the performance of each method on the w8a and a9a datasets. The left column displays the difference between the function output at each round and the optimal value, $|f(\vtheta^{(t)})-f(\vtheta^*)|$, while the right column depicts the L2 norm of the difference between the parameter at each round and the optimal solution, $\|\vtheta^{(t)}-\vtheta^*\|$. The optimal solution $\vtheta^*$ was determined as the parameter obtained after $20$ iterations of standard Newton's method using all data samples. The initial parameter was drawn from a normal distribution centered on $\vtheta^*$ with a standard deviation of $0.1$. This choice ensures that the initial parameter is sufficiently close to the optimal solution, which is a necessary condition for the validity of the convergence analysis. In these experiments, both FedPM and FedNL significantly outperformed other methods, consistently achieving smaller function output gaps and parameter distances, thus indicating more efficient convergence. Notably, the rapid, accelerating decrease in the parameter distance for FedPM and FedNL shown in the right-hand plots is consistent with the superlinear convergence rate shown in Theorem \ref{thm:fedpm_informal}.
Although FedNL and FedPM with $K=1$ are equivalent in global parameter updates, minor discrepancies were observed in the outcomes. These differences may be attributed to precision errors resulting from variations in their respective update procedures.

\subsection{Test 2 using non-convex DNN models}
\label{subsec:test2}
\subsubsection{Experimental setups}

\noindent\\
\textbf{Datasets, models, and network configuration}:
To empirically assess the effectiveness of the proposed FedPM with non-convex DNN models, we conducted tests using two image classification benchmarks: (T1) CIFAR10\footnote{\url{https://www.cs.toronto.edu/~kriz/cifar.html}} with simple CNN as in \cite{li2021model, luo2021no}, and (T2) CIFAR100 with ResNet18 \cite{he2016deep}, where BatchNorm layers \cite{ioffe2015batch} were replaced by GroupNorm layers \cite{wu2018group} to enhance robustness against data heterogeneity. The center server and $N=10$ clients can communicate every $\{ 1, 5, 10 \}$ local update epochs across $T=100$ communication rounds. Client sampling was not employed in the main paper's experimental results, as its potential impacts have been explored in Appendix \ref{secap:addexp}. Data heterogeneity among clients was intentionally induced by varying the Dirichlet distribution concentration parameter $\alpha$ ($\alpha \in \{ 0.1, 1.0 \}$) following the implementation in \cite{vogels2021relaysum}, with a lower $\alpha$ indicating stronger data heterogeneity. Each client holds a different number of data samples, and the distribution of data samples across clients is illustrated in Appendix \ref{secap:settings}. We empirically investigated the impact of the number of local update epochs and the levels of data heterogeneity $\alpha$ on the test accuracy of global parameters. 

\noindent
\textbf{Comparison methods}:
In addition to the methods evaluated in Test 1, we added a comparison with FedProx \cite{li2020federated} in Test 2. However, SOGM methods like FedNL and FedNew were found to be computationally prohibitive for these large-scale DNN experiments, failing to complete a sufficient number of rounds within a practical time budget, and were thus excluded from this comparison. Additionally, whereas the local full Hessian was computed as for preconditioner in Test 1, it was approximated using FOOF for LocalNewton and FedPM in Test 2, as detailed in Sec. \ref{subsec:approx}. Statistical significance of FedPM's result against the comparison methods is summarized in Appendix \ref{secap:settings}. Additionally, empirical profiling results are provided in Appendix \ref{secap:profiling}.

\noindent
\textbf{Hyperparameter settings}:
The learning rate, the norm value for gradient clipping, the weight decay coefficient (regularization coefficient $\mu$ in FedProx), the damping term for second-order optimization methods, and unique hyperparameters for each method such as the momentum value in FedAvgM, were tuned to achieve the highest average of the highest test accuracy accross three different random seeds for each method. Additionally, we computed FOOF matrices only at the end of each round, just before the communication, to further improve efficiency for second-order methods. We empirically observed that this approach did not degrade performance compared to periodically updating the matrices during local training. For FOOF matrix computation, we utilized the full local dataset. An ablation study on the impact of the number of samples used for matrix computation is provided in Appendix \ref{secap:addexp}. Details of these configurations are summarized in Appendix \ref{secap:settings}.
\begin{table}[t]
  \centering
  \setlength{\tabcolsep}{1mm}
  \begin{tabular}{l||c|c|c|c}
    \hline
    \multirow{2}{*}{Method} & \multicolumn{2}{c|}{CIFAR10}        & \multicolumn{2}{c}{CIFAR100}        \\ \cline{2-5}
                            & $\alpha=1.0$     & $\alpha=0.1$     & $\alpha=1.0$     &   $\alpha=0.1$   \\ \hline    \hline
    FedAvg & 70.1$\pm$1.3 & 63.1$\pm$2.1 & 59.8$\pm$0.7 & 52.4$\pm$0.9             \\ \hline
    FedAvgM & 72.6$\pm$1.5 & 64.8$\pm$3.0 & 61.2$\pm$0.2 & 54.9$\pm$0.6               \\ \hline
    FedProx & 70.1$\pm$1.1 & 62.8$\pm$1.3 & 56.6$\pm$0.7 & 50.7$\pm$0.9               \\ \hline
    SCAFFOLD & 71.4$\pm$0.5 & 65.3$\pm$2.7 & 64.7$\pm$1.1 & 58.2$\pm$0.6     \\ \hline
    FedAdam & 70.4$\pm$1.6 & 64.6$\pm$2.8  & 58.3$\pm$0.3 & 51.9$\pm$0.4  \\ \hline
    LocalNewton & 73.4$\pm$1.4 & 62.4$\pm$2.3 & \textbf{70.2}$\pm$0.3 & 61.9$\pm$0.6  \\ \hline
    FedPM & \textbf{74.8}$\pm$0.5 & \textbf{68.6}$\pm$2.0 & 68.4$\pm$2.2 & \textbf{63.1}$\pm$0.8  \\ \hline
  \end{tabular}
  \caption{Average best test accuracy of global parameter (over three random seeds) for each method on various datasets with $\alpha=1.0$ or $\alpha=0.1$ (the smaller, the stronger heterogeneity), where local updates are performed for 5 epochs before each communication. The standard deviation across different seeds is also shown for each. The FOOF approximation was applied to both FedPM and LocalNewton.}  
  \label{tab:accuracy_table}
\end{table}
\subsubsection{Experimental results}
Table \ref{tab:accuracy_table} displays the highest test accuracy of a global parameter achieved by each method across four test settings, which combine two datasets/models with two levels of data heterogeneity.
These results demonstrate that our proposed FedPM outperformed all other comparison methods in test accuracy in most cases. Notably, FedPM exhibited superior performance at the stronger heterogeneity condition ($\alpha=0.1$), while LocalNewton's performance deteriorated significantly as data heterogeneity increased. These empirical findings suggest that preconditioned mixing improves alignment between local and global parameter updates, offering substantial benefits in heterogeneous data settings.

Figure \ref{fig:curve} illustrates convergence curves when local parameters are updated for five epochs before each communication under a stronger heterogeneous data setting ($\alpha=0.1$) on CIFAR10 classification. The results show that FedPM converged faster than other methods in terms of both communication rounds and runtime, thanks to its efficient preconditioner approximation method. This is crucial in FL because FedPM enables the model to reach the desired performance with fewer communication rounds. These findings confirm that FedPM’s ability to incorporate global curvature information through preconditioned mixing is useful for improving not only generalization and parameter accuracy but also the credibility of second-order FL in real-world scenarios. Convergence curves on CIFAR100 classification are included in Appendix \ref{secap:addexp}. 

\noindent
 \begin{figure}[t]
  \centering
  \includegraphics[width=0.9\columnwidth]{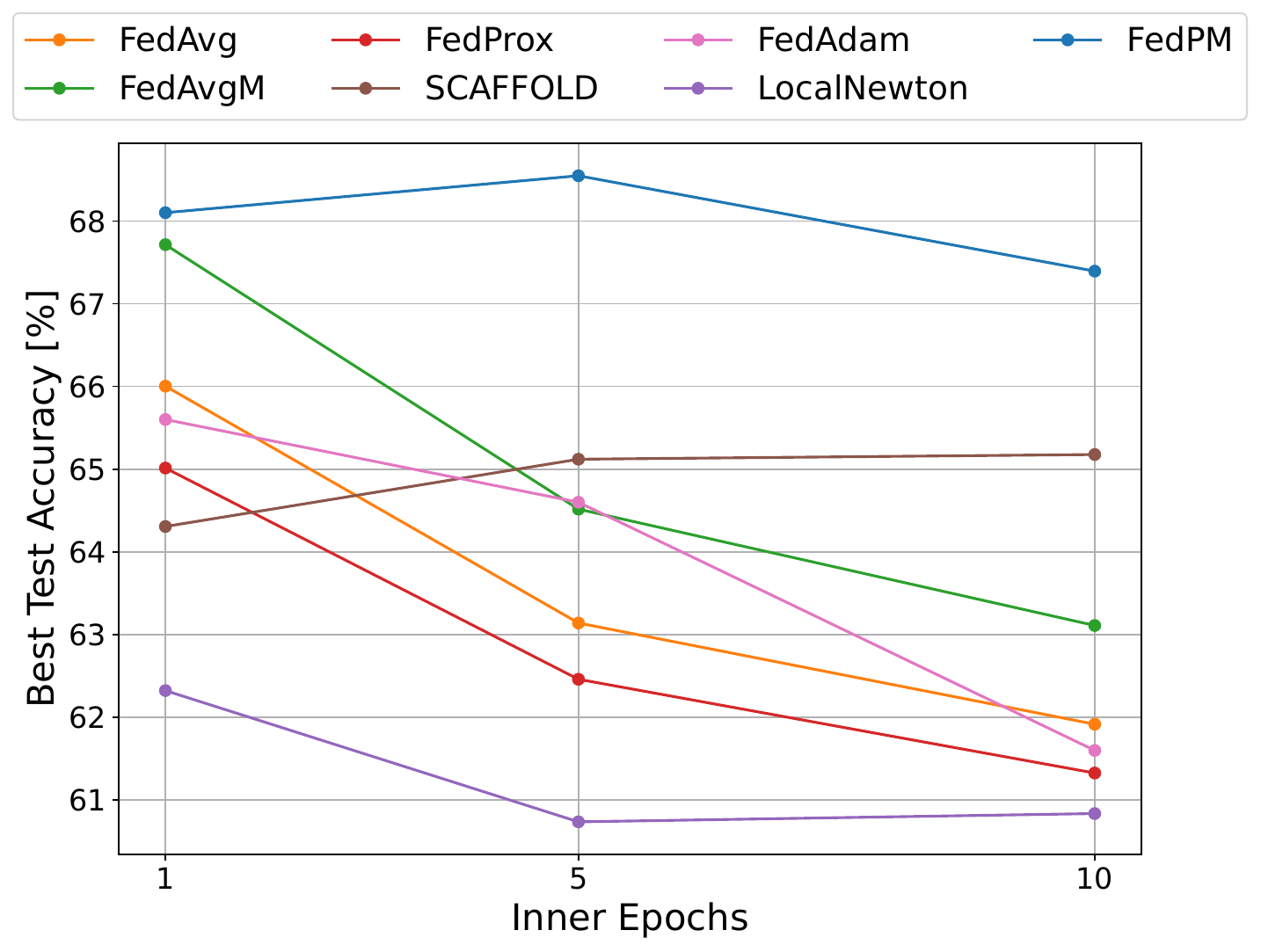}
  \captionsetup{width=\columnwidth}
  \caption{Relationship between the number of multiple local parameter updates and  
  test accuracy for CIFAR10 classification with $\alpha=0.1$. The plot shows the average highest test accuracy across three seeds.
  Experiments are conducted with a fixed total of $500$ communication rounds achieved by using 1 inner epoch (500 rounds), 5 inner epochs (100 rounds), and 10 inner epochs (50 rounds).}
  \label{fig:inner_epoch_result}
\end{figure}
Figure \ref{fig:inner_epoch_result} presents the CIFAR10 classification results with $\alpha = 0.1$ on varying number of inner epochs, where the total number of local epochs is fixed to $500$ by adjusting the number of communication rounds. Specifically, the three settings correspond to: $500$ rounds with $1$ local epoch per round, $100$ rounds with $5$ local epochs per round, and $50$ rounds with $10$ local epochs per round. For each setting, we report the average of the highest test accuracies achieved by the global model over three different random seeds. 
The results indicate that FedPM consistently outperforms other methods, even when varying the number of local parameter updates.

\section{Conclusion}
We introduced FedPM, a novel Federated Learning (FL) method that unlocks the potential of second-order optimization. Unlike conventional Second-Order Parameter Mixing (SOPM) methods that are sensitive to data heterogeneity, FedPM incorporates preconditioned mixing at the server. This aligns parameter updates with the ideal global curvature, enhancing stability and convergence. Our theoretical analysis establishes a superlinear convergence rate for FedPM under strongly convex objectives and a single local update. Extensive experiments confirm its practical superiority: FedPM significantly outperformed state-of-the-art methods on both strongly convex and non-convex deep learning models, demonstrating particular robustness in highly heterogeneous settings. These results validate FedPM as a significant and practical advancement for second-order optimization in real-world federated scenarios.

\section*{Acknowledgements}
This work was supported by JST SPRING, Japan Grant Number JPMJSP2180.

\bibliography{main.bib}

@inproceedings{wu2018group,
  title={Group normalization},
  author={Wu, Yuxin and He, Kaiming},
  booktitle={Proceedings of the European Conference on Computer Vision (ECCV)},
  pages={3--19},
  year={2018}
}

@inproceedings{he2016deep,
  title={Deep residual learning for image recognition},
  author={He, Kaiming and Zhang, Xiangyu and Ren, Shaoqing and Sun, Jian},
  booktitle={Proceedings of the IEEE Conference on Computer Vision and Pattern Recognition (CVPR)},
  pages={770--778},
  year={2016}
}

@inproceedings{ioffe2015batch,
  title={Batch normalization: Accelerating deep network training by reducing internal covariate shift},
  author={Ioffe, Sergey and Szegedy, Christian},
  booktitle={International conference on machine learning},
  pages={448--456},
  year={2015},
  organization={PMLR}
}

@article{vogels2021relaysum,
  title={Relaysum for decentralized deep learning on heterogeneous data},
  author={Vogels, Thijs and He, Lie and Koloskova, Anastasiia and Karimireddy, Sai Praneeth and Lin, Tao and Stich, Sebastian U and Jaggi, Martin},
  journal={Advances in Neural Information Processing Systems},
  volume={34},
  pages={28004--28015},
  year={2021}
}

@misc{gupta2021localnewton,
      title={LocalNewton: Reducing Communication Bottleneck for Distributed Learning}, 
      author={Vipul Gupta and Avishek Ghosh and Michal Derezinski and Rajiv Khanna and Kannan Ramchandran and Michael Mahoney},
      year={2021},
      eprint={2105.07320},
      archivePrefix={arXiv},
      primaryClass={cs.DC},
      url={https://arxiv.org/abs/2105.07320}, 
}

@misc{hsu2019measuring,
      title={Measuring the Effects of Non-Identical Data Distribution for Federated Visual Classification}, 
      author={Tzu-Ming Harry Hsu and Hang Qi and Matthew Brown},
      year={2019},
      eprint={1909.06335},
      archivePrefix={arXiv},
      primaryClass={cs.LG},
      url={https://arxiv.org/abs/1909.06335}, 
}

@article{li2020federated,
  title={Federated optimization in heterogeneous networks},
  author={Li, Tian and Sahu, Anit Kumar and Zaheer, Manzil and Sanjabi, Maziar and Talwalkar, Ameet and Smith, Virginia},
  journal={Proceedings of Machine learning and systems},
  volume={2},
  pages={429--450},
  year={2020}
}

@inproceedings{karimireddy2020scaffold,
  title={Scaffold: Stochastic controlled averaging for federated learning},
  author={Karimireddy, Sai Praneeth and Kale, Satyen and Mohri, Mehryar and Reddi, Sashank and Stich, Sebastian and Suresh, Ananda Theertha},
  booktitle={International conference on machine learning},
  pages={5132--5143},
  year={2020},
  organization={PMLR}
}

@inproceedings{mcmahan2017communication,
  title={Communication-efficient learning of deep networks from decentralized data},
  author={McMahan, Brendan and Moore, Eider and Ramage, Daniel and Hampson, Seth and y Arcas, Blaise Aguera},
  booktitle={Artificial intelligence and statistics},
  pages={1273--1282},
  year={2017},
  organization={PMLR}
}

@inproceedings{safaryan2021fednl,
  title={FedNL: Making Newton-Type Methods Applicable to Federated Learning},
  author={Safaryan, Mher and Islamov, Rustem and Qian, Xun and Richtarik, Peter},
  booktitle={International Conference on Machine Learning},
  pages={18959--19010},
  year={2022},
  organization={PMLR}
}

@inproceedings{elgabli2022fednew,
  title={FedNew: A communication-efficient and privacy-preserving Newton-type method for federated learning},
  author={Elgabli, Anis and Issaid, Chaouki Ben and Bedi, Amrit Singh and Rajawat, Ketan and Bennis, Mehdi and Aggarwal, Vaneet},
  booktitle={International conference on machine learning},
  pages={5861--5877},
  year={2022},
  organization={PMLR}
}

@inproceedings{elbakary2024fed,
  title={Fed-sophia: A communication-efficient second-order federated learning algorithm},
  author={Elbakary, Ahmed and Issaid, Chaouki Ben and Shehab, Mohammad and Seddik, Karim and ElBatt, Tamer and Bennis, Mehdi},
  booktitle={ICC 2024-IEEE International Conference on Communications},
  pages={950--955},
  year={2024},
  organization={IEEE}
}

@inproceedings{benzing2022gradient,
  title={Gradient descent on neurons and its link to approximate second-order optimization},
  author={Benzing, Frederik},
  booktitle={International Conference on Machine Learning},
  pages={1817--1853},
  year={2022},
  organization={PMLR}
}

@article{chang2011libsvm,
  title={LIBSVM: a library for support vector machines},
  author={Chang, Chih-Chung and Lin, Chih-Jen},
  journal={ACM transactions on intelligent systems and technology (TIST)},
  volume={2},
  number={3},
  pages={1--27},
  year={2011},
  publisher={Acm New York, NY, USA}
}

@article{kairouz2021advances,
  title={Advances and open problems in federated learning},
  author={Kairouz, Peter and McMahan, H Brendan and Avent, Brendan and Bellet, Aur{\'e}lien and Bennis, Mehdi and Bhagoji, Arjun Nitin and Bonawitz, Kallista and Charles, Zachary and Cormode, Graham and Cummings, Rachel and others},
  journal={Foundations and trends{\textregistered} in machine learning},
  volume={14},
  number={1--2},
  pages={1--210},
  year={2021},
  publisher={Now Publishers, Inc.}
}

@misc{konevcny2016federateda,
      title={Federated Optimization: Distributed Machine Learning for On-Device Intelligence}, 
      author={Jakub Konečný and H. Brendan McMahan and Daniel Ramage and Peter Richtárik},
      year={2016},
      eprint={1610.02527},
      archivePrefix={arXiv},
      primaryClass={cs.LG},
      url={https://arxiv.org/abs/1610.02527}, 
}

@misc{konevcny2016federatedb,
      title={Federated Learning: Strategies for Improving Communication Efficiency}, 
      author={Jakub Konečný and H. Brendan McMahan and Felix X. Yu and Peter Richtárik and Ananda Theertha Suresh and Dave Bacon},
      year={2017},
      eprint={1610.05492},
      archivePrefix={arXiv},
      primaryClass={cs.LG},
      url={https://arxiv.org/abs/1610.05492}, 
}

@article{collins2022fedavg,
  title={Fedavg with fine tuning: Local updates lead to representation learning},
  author={Collins, Liam and Hassani, Hamed and Mokhtari, Aryan and Shakkottai, Sanjay},
  journal={Advances in Neural Information Processing Systems},
  volume={35},
  pages={10572--10586},
  year={2022}
}

@inproceedings{woodworth2020local,
  title={Is local SGD better than minibatch SGD?},
  author={Woodworth, Blake and Patel, Kumar Kshitij and Stich, Sebastian and Dai, Zhen and Bullins, Brian and Mcmahan, Brendan and Shamir, Ohad and Srebro, Nathan},
  booktitle={International Conference on Machine Learning},
  pages={10334--10343},
  year={2020},
  organization={PMLR}
}

@article{martens2020new,
  title={New insights and perspectives on the natural gradient method},
  author={Martens, James},
  journal={Journal of Machine Learning Research},
  volume={21},
  number={146},
  pages={1--76},
  year={2020}
}

@article{luo2021no,
  title={No fear of heterogeneity: Classifier calibration for federated learning with non-iid data},
  author={Luo, Mi and Chen, Fei and Hu, Dapeng and Zhang, Yifan and Liang, Jian and Feng, Jiashi},
  journal={Advances in Neural Information Processing Systems},
  volume={34},
  pages={5972--5984},
  year={2021}
}

@inproceedings{li2021model,
  title={Model-contrastive federated learning},
  author={Li, Qinbin and He, Bingsheng and Song, Dawn},
  booktitle={Proceedings of the IEEE/CVF conference on computer vision and pattern recognition},
  pages={10713--10722},
  year={2021}
}

@article{schraudolph2002fast,
  title={Fast curvature matrix-vector products for second-order gradient descent},
  author={Schraudolph, Nicol N},
  journal={Neural computation},
  volume={14},
  number={7},
  pages={1723--1738},
  year={2002},
  publisher={MIT Press}
}

@misc{geyer2017differentially,
      title={Differentially Private Federated Learning: A Client Level Perspective}, 
      author={Robin C. Geyer and Tassilo Klein and Moin Nabi},
      year={2018},
      eprint={1712.07557},
      archivePrefix={arXiv},
      primaryClass={cs.CR},
      url={https://arxiv.org/abs/1712.07557}, 
}

@inproceedings{bonawitz2017practical,
  title={Practical secure aggregation for privacy-preserving machine learning},
  author={Bonawitz, Keith and Ivanov, Vladimir and Kreuter, Ben and Marcedone, Antonio and McMahan, H Brendan and Patel, Sarvar and Ramage, Daniel and Segal, Aaron and Seth, Karn},
  booktitle={proceedings of the 2017 ACM SIGSAC Conference on Computer and Communications Security},
  pages={1175--1191},
  year={2017}
}

@article{bonawitz2019towards,
  title={Towards federated learning at scale: System design},
  author={Bonawitz, Keith and Eichner, Hubert and Grieskamp, Wolfgang and Huba, Dzmitry and Ingerman, Alex and Ivanov, Vladimir and Kiddon, Chloe and Kone{\v{c}}n{\`y}, Jakub and Mazzocchi, Stefano and McMahan, Brendan and others},
  journal={Proceedings of machine learning and systems},
  volume={1},
  pages={374--388},
  year={2019}
}

@inproceedings{
reddi2021adaptive,
title={Adaptive Federated Optimization},
author={Sashank J. Reddi and Zachary Charles and Manzil Zaheer and Zachary Garrett and Keith Rush and Jakub Kone{\v{c}}n{\'y} and Sanjiv Kumar and Hugh Brendan McMahan},
booktitle={International Conference on Learning Representations},
year={2021},
url={https://openreview.net/forum?id=LkFG3lB13U5}
}

@misc{caldas2019leaf,
      title={LEAF: A Benchmark for Federated Settings}, 
      author={Sebastian Caldas and Sai Meher Karthik Duddu and Peter Wu and Tian Li and Jakub Konečný and H. Brendan McMahan and Virginia Smith and Ameet Talwalkar},
      year={2019},
      eprint={1812.01097},
      archivePrefix={arXiv},
      primaryClass={cs.LG},
      url={https://arxiv.org/abs/1812.01097}, 
}

@inproceedings{li2024fedns,
  title={Fedns: A fast sketching newton-type algorithm for federated learning},
  author={Li, Jian and Liu, Yong and Wang, Weiping},
  booktitle={Proceedings of the AAAI Conference on Artificial Intelligence},
  volume={38},
  number={12},
  pages={13509--13517},
  year={2024}
}

@inproceedings{sen2023federated,
  title={Federated optimization with linear-time approximated hessian diagonal},
  author={Sen, Mrinmay and Mohan, C Krishna and Qin, A Kai},
  booktitle={International Conference on Pattern Recognition and Machine Intelligence},
  pages={106--113},
  year={2023},
  organization={Springer}
}

@inproceedings{ozdayi2021defending,
  title={Defending against backdoors in federated learning with robust learning rate},
  author={Ozdayi, Mustafa Safa and Kantarcioglu, Murat and Gel, Yulia R},
  booktitle={Proceedings of the AAAI conference on artificial intelligence},
  volume={35},
  number={10},
  pages={9268--9276},
  year={2021}
}

@article{amari1998natural,
  title={Natural gradient works efficiently in learning},
  author={Amari, Shun-Ichi},
  journal={Neural computation},
  volume={10},
  number={2},
  pages={251--276},
  year={1998},
  publisher={MIT Press}
}

@inproceedings{oldenhof2023industry,
  title={Industry-scale orchestrated federated learning for drug discovery},
  author={Oldenhof, Martijn and {\'A}cs, Gergely and Pej{\'o}, Bal{\'a}zs and Schuffenhauer, Ansgar and Holway, Nicholas and Sturm, No{\'e} and Dieckmann, Arne and Fortmeier, Oliver and Boniface, Eric and Mayer, Cl{\'e}ment and others},
  booktitle={Proceedings of the aaai conference on artificial intelligence},
  volume={37},
  number={13},
  pages={15576--15584},
  year={2023}
}

@inproceedings{lee2023layer,
  title={Layer-wise adaptive model aggregation for scalable federated learning},
  author={Lee, Sunwoo and Zhang, Tuo and Avestimehr, A Salman},
  booktitle={Proceedings of the AAAI Conference on Artificial Intelligence},
  volume={37},
  number={7},
  pages={8491--8499},
  year={2023}
}

@inproceedings{zang2024efficient,
  title={Efficient asynchronous federated learning with prospective momentum aggregation and fine-grained correction},
  author={Zang, Yu and Xue, Zhe and Ou, Shilong and Chu, Lingyang and Du, Junping and Long, Yunfei},
  booktitle={Proceedings of the AAAI Conference on Artificial Intelligence},
  volume={38},
  number={15},
  pages={16642--16650},
  year={2024}
}

@article{mann1947test,
  title={On a test of whether one of two random variables is stochastically larger than the other},
  author={Mann, Henry B and Whitney, Donald R},
  journal={The annals of mathematical statistics},
  pages={50--60},
  year={1947},
  publisher={JSTOR}
}

\clearpage

\appendix
\section{Theoretical analysis (Theorem \ref{thm:fedpm_informal})}
\label{secap:thproof}
The formal statement of Theorem \ref{thm:fedpm_informal} in Section \ref{subsec:analysis} is given, including assumptions, conditions, and proofs. According to \cite{safaryan2021fednl}, the following assumptions are used:



\begin{assumption}[Strongly convex]
\label{assumption1}
The average of local loss functions $f(\vtheta) = \frac{1}{N} \sum_{i=1}^{N} f_{i}(\vtheta)$ is $\mu$-strongly convex. 
\end{assumption}
\begin{assumption}[Hessian smoothness]
\label{assumption2}
Local loss functions $f_i(\vtheta)$ have Lipschitz continuous Hessians. Let $L_* (>0)$ and $L_F (>0)$ denote the Lipschitz constants with respect to two different matrix norms: spectral norm $\|\cdot\|$ and Frobenius norm $\|\cdot\|_{\rm F}$, respectively. Formally, we require the following conditions to hold for all $i \in \{ 1,\ldots,N \}$ and $\mathbf{a}, \mathbf{b} \in \R^d$:
\begin{align*}
\| \nabla^2 f_i(\mathbf{a}) - \nabla^2 f_i(\mathbf{b}) \| & \leq L_* \| \mathbf{a} - \mathbf{b} \|,\\
\| \nabla^2 f_i(\mathbf{a}) - \nabla^2 f_i(\mathbf{b}) \|_{\rm F} & \leq L_F \| \mathbf{a} - \mathbf{b} \|.
\end{align*}
\end{assumption}
\begin{condition}[Initial condition]
\label{condition1}
The initial global parameter $\vtheta^{(0)}$ and local Hessian $\nabla^2 f_i(\vtheta^{(0)})$ satisfy
\[\|\vtheta^{(0)} - \vtheta^*\| \leq \min\left\{\frac{\mu}{2\sqrt{2}L_F}_{\textstyle ,} \frac{\mu}{\sqrt{2} L_*}\right\},\]
\[\left\|\nabla^2 f_i(\vtheta^{(0)}) - \nabla^2 f_i(\vtheta^*)\right\|_{\rm F} \leq \frac{\mu}{2\sqrt{2}}, \]
where $\vtheta^*$ denotes the optimal solution of the global loss function $f$. 
\end{condition}


The formal statement of Theorem \ref{thm:fedpm_informal} is presented below:

\begin{formalthm}[(Formal) convergence rate of FedPM with single local update]
\label{thm:fedpm}
Under Assumptions \ref{assumption1}, \ref{assumption2}, and Condition \ref{condition1}, for the FedPM with single local update $(K=1)$ in (\ref{eq:fedpm_single}), we obtain the following linear rate for the Lyapunov function 
\begin{align*}
    \E\left[\Phi^{(t)}\right] & \leq \left(\frac{7}{12}\right)^t \Phi^{(0)},  
\end{align*}
where
\begin{align*}
&\Phi^{(t)}:=\frac{1}{n} \sum_{i=1}^n\left\|\nabla^2 f_i(\vtheta^{(t)})-\nabla^2 f_i\left(\vtheta^*\right)\right\|_{\rm F}^2\\
&\hspace{140pt}+6 L_{F}^2\left\|\vtheta^{(t)}-\vtheta^*\right\|^2.
\end{align*}
Additionally, the superlinear rate for the global parameter convergence is obtained:
\begin{align}    
    \E\left[\frac{\|\vtheta^{(t+1)} - \vtheta^*\|^2}{\|\vtheta^{(t)} - \vtheta^*\|^2}\right] & \leq \left(\frac{7}{12}\right)^t \left ( 2 + \frac{L_*^2}{12L_F^2} \right) \frac{\Phi^{(0)}}{\mu^2}.
\end{align}
\end{formalthm}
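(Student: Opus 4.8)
The plan is to exploit the identity already established in (\ref{eq:fedpm_derivation})--(\ref{eq:fedpm_single}): for $K=1$ the combined client--server rule collapses to an exact global Newton step $\vtheta^{(t+1)} = \vtheta^{(t)} - (\mP^{(t)})^{-1}\nabla f(\vtheta^{(t)})$ with $\mP^{(t)} = \nabla^2 f(\vtheta^{(t)})$ (taking $\eta=1$). Since full gradients and Hessians are assumed, there is no genuine randomness and I would treat the expectation as vestigial, inherited from the FedNL template, and argue the deterministic inequalities. First I would subtract $\vtheta^*$, factor out $(\mP^{(t)})^{-1}$, and combine $\nabla f(\vtheta^*)=\vzero$ with the integral identity $\nabla f(\vtheta^{(t)}) = \int_0^1 \nabla^2 f(\vtheta^*+\tau(\vtheta^{(t)}-\vtheta^*))\,d\tau\,(\vtheta^{(t)}-\vtheta^*)$ to obtain $\vtheta^{(t+1)}-\vtheta^* = (\mP^{(t)})^{-1}\int_0^1[\nabla^2 f(\vtheta^{(t)}) - \nabla^2 f(\vtheta^*+\tau(\vtheta^{(t)}-\vtheta^*))]\,d\tau\,(\vtheta^{(t)}-\vtheta^*)$. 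Inserting $\nabla^2 f(\vtheta^*)$ as a reference splits the integrand into a Hessian-mismatch part $\mP^{(t)}-\nabla^2 f(\vtheta^*)$ and a remainder bounded via the spectral Lipschitz constant of Assumption \ref{assumption2}; together with $\|(\mP^{(t)})^{-1}\|\le 1/\mu$ from Assumption \ref{assumption1}, this yields the one-step bound $\|\vtheta^{(t+1)}-\vtheta^*\| \le \frac{1}{\mu}\|\mP^{(t)}-\nabla^2 f(\vtheta^*)\|\,\|\vtheta^{(t)}-\vtheta^*\| + \frac{L_*}{2\mu}\|\vtheta^{(t)}-\vtheta^*\|^2$.

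Next I would tie the mismatch term to the Lyapunov quantity. By the triangle inequality and Cauchy--Schwarz, $\|\mP^{(t)}-\nabla^2 f(\vtheta^*)\| \le \frac{1}{N}\sum_i\|\nabla^2 f_i(\vtheta^{(t)})-\nabla^2 f_i(\vtheta^*)\|_{\rm F} \le \sqrt{\gH^{(t)}}$, where I abbreviate $\gH^{(t)} := \frac{1}{N}\sum_i\|\nabla^2 f_i(\vtheta^{(t)})-\nabla^2 f_i(\vtheta^*)\|_{\rm F}^2$ for the first part of $\Phi^{(t)}$. The heart of the argument is an induction establishing that the region carved out by Condition \ref{condition1}, namely $\sqrt{\gH^{(t)}}\le \mu/(2\sqrt{2})$ and $\|\vtheta^{(t)}-\vtheta^*\|\le \mu/(\sqrt{2}L_*)$, is preserved across rounds. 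Substituting these two bounds into the one-step inequality makes each of its two terms at most $\tfrac{1}{2\sqrt{2}}\|\vtheta^{(t)}-\vtheta^*\|$, hence $\|\vtheta^{(t+1)}-\vtheta^*\|\le \tfrac{1}{\sqrt{2}}\|\vtheta^{(t)}-\vtheta^*\|$, so the distance strictly decreases; then the Frobenius Lipschitz bound of Assumption \ref{assumption2} gives $\sqrt{\gH^{(t+1)}}\le L_F\|\vtheta^{(t+1)}-\vtheta^*\|$, which stays inside the ball and closes the induction.

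I would then assemble the Lyapunov contraction. Squaring the one-step bound and using $(a+b)^2\le 2a^2+2b^2$ with the region bounds delivers $\|\vtheta^{(t+1)}-\vtheta^*\|^2 \le \tfrac12\|\vtheta^{(t)}-\vtheta^*\|^2$. Frobenius Hessian smoothness gives $\gH^{(t+1)}\le L_F^2\|\vtheta^{(t+1)}-\vtheta^*\|^2$, so $\Phi^{(t+1)} = \gH^{(t+1)}+6L_F^2\|\vtheta^{(t+1)}-\vtheta^*\|^2 \le 7L_F^2\|\vtheta^{(t+1)}-\vtheta^*\|^2 \le \tfrac{7}{2}L_F^2\|\vtheta^{(t)}-\vtheta^*\|^2$, and the lower bound $\Phi^{(t)}\ge 6L_F^2\|\vtheta^{(t)}-\vtheta^*\|^2$ yields $\Phi^{(t+1)}\le \tfrac{7}{12}\Phi^{(t)}$, which unrolls to $\Phi^{(t)}\le (7/12)^t\Phi^{(0)}$. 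For the superlinear statement I would revisit the squared one-step bound written as $\frac{\|\vtheta^{(t+1)}-\vtheta^*\|^2}{\|\vtheta^{(t)}-\vtheta^*\|^2} \le \frac{2}{\mu^2}\gH^{(t)} + \frac{L_*^2}{2\mu^2}\|\vtheta^{(t)}-\vtheta^*\|^2$, bound $\gH^{(t)}\le \Phi^{(t)}$ and $\|\vtheta^{(t)}-\vtheta^*\|^2 \le \frac{1}{6L_F^2}\Phi^{(t)}$, and substitute the geometric decay of $\Phi^{(t)}$ to land exactly on the factor $\frac{1}{\mu^2}\big(2+\frac{L_*^2}{12L_F^2}\big)(7/12)^t\Phi^{(0)}$.

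The main obstacle is the induction of the second step: checking that the constants in Condition \ref{condition1} are precisely those that keep every iterate inside the contraction region and make the two one-step terms balance, and tracking them tightly enough to reach the specific factor $7/12$ rather than a looser geometric rate. Everything else--the integral-remainder estimate, the norm conversions, and the final substitution--is routine. The only conceptual subtlety I would flag is that the Lyapunov machinery is borrowed from FedNL even though for FedPM the mismatch $\gH^{(t)}$ is a deterministic function of $\vtheta^{(t)}$ (the current, not a stale, Hessian is used), so the superlinear rate could alternatively be read off directly from a quadratic Newton bound.
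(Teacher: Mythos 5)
Your proposal is correct and follows essentially the same route as the paper's proof: a global Newton one-step bound split into a Hessian-mismatch term and a Taylor remainder, the mismatch controlled by ${\mathcal H}^{(t)}$ via convexity/Cauchy--Schwarz, an induction keeping iterates in the region of Condition \ref{condition1} to get the $\tfrac12$ contraction, the Frobenius-Lipschitz bound on ${\mathcal H}^{(t+1)}$, the same $\tfrac{7}{12}$ Lyapunov combination, and the identical substitution for the superlinear rate. The only differences are cosmetic (integral-form Taylor remainder and unsquared-norm bookkeeping versus the paper's squared-norm algebra), and your explicit two-part induction on both $\sqrt{{\mathcal H}^{(t)}}$ and $\|\vtheta^{(t)}-\vtheta^*\|$ is, if anything, stated more carefully than the paper's.
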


\begin{proof}
Proof of Theorem \ref{thm:fedpm} is presented. As explained in Sec. \ref{subsec:analysis}, the FedPM is originally derived from the second-order global parameter update rule, as in (\ref{eq:fedpm_derivation}) and (\ref{eq:fedpm_single}). Both FedPM with single local parameter update ($K=1$) and FedNL follow the same global parameter update rule. Our convergence analysis primarily follows the strategy in FedNL \cite{safaryan2021fednl}. The distinctions from the FedNL proof include disregarding the compression operation of the Hessian and maintaining a constant learning rate for the Hessian at $1$. First, we derive the recurrence relation for $\|\vtheta^{(t)}-\vtheta^*\|^2$:
\begin{align*}
&\left\| \vtheta^{(t+1)} - \vtheta^* \right\|^2 \hspace{-2pt}
\notag \\
&=   \left\|\vtheta^{(t)}-\vtheta^* - \left(\nabla^2 f(\vtheta^{(t)})\right)^{-1} \nabla f(\vtheta^{(t)}) \right\|^2 \notag \\
&\le \left\| \left(\nabla^2 f(\vtheta^{(t)})\right)^{-1} \right\|^2 \\ 
& \hspace{50pt}+ \left\|\nabla^2 f(\vtheta^{(t)})(\vtheta^{(t)}-\vtheta^*) - \nabla f(\vtheta^{(t)})\right\|^2 \notag \\
&\le \frac{2}{\mu^2}\left( \left\|\left(\nabla^2 f(\vtheta^{(t)}) - \nabla^2 f(\vtheta^*)\right)(\vtheta^{(t)}-\vtheta^*) \right\|^2 \right.\notag \\
&\qquad + \left.\left\|\nabla^2 f(\vtheta^*)(\vtheta^{(t)}-\vtheta^*) - \nabla f(\vtheta^{(t)}) + \nabla f(\vtheta^*) \right\|^2\right) \notag \\
&= \frac{2}{\mu^2}\left( \left\|\left(\nabla^2 f(\vtheta^{(t)}) - \nabla^2 f(\vtheta^*)\right)(\vtheta^{(t)}-\vtheta^*) \right\|^2 \right. \notag \\
&\qquad + \left. \left\| \nabla f(\vtheta^{(t)}) - \nabla f(\vtheta^*) - \nabla^2 f(\vtheta^*)(\vtheta^{(t)}-\vtheta^*) \right\|^2\right) \notag \\
&\le \frac{2}{\mu^2}\left(
\left\|\nabla^2 f(\vtheta^{(t)}) - \nabla^2 f(\vtheta^*)\right\|^2 \left\|\vtheta^{(t)}-\vtheta^* \right\|^2 \right. \notag \\
& \left. \hspace{140pt}
+ \frac{L_*^2}{4} \left\|\vtheta^{(t)}-\vtheta^* \right\|^4
\right) \notag \\
&=   \frac{2}{\mu^2} \left\|\vtheta^{(t)}-\vtheta^* \right\|^2 \left(
\left\|\nabla^2 f(\vtheta^{(t)}) - \nabla^2 f(\vtheta^*)\right\|^2 \right. \notag \\
& \hspace{140pt} \left. 
+ \frac{L_*^2}{4} \left\|\vtheta^{(t)}-\vtheta^* \right\|^2
\right) \notag \\
&\le \frac{2}{\mu^2} \left\|\vtheta^{(t)}-\vtheta^* \right\|^2 \left(
\left\|\nabla^2 f(\vtheta^{(t)}) - \nabla^2 f(\vtheta^*)\right\|^2_{\rm F} \right. \notag \\
& \hspace{140pt} \left. 
+ \frac{L_*^2}{4} \left\|\vtheta^{(t)}-\vtheta^* \right\|^2
\right)  \notag, 
\end{align*}
where we use $\nabla^2 f(\vtheta^{(t)}) \succeq \mu \mI$ in the second inequality following Assumption \ref{assumption1}. From the convexity of $\|\cdot \|^2_{\rm F}$, we have 
\begin{align*}
&\left\|\nabla^2 f(\vtheta^{(t)}) - \nabla^2 f(\vtheta^*) \right\|^2_{\rm F}\\
&= \left\| \frac{1}{n}\sum_{i=1}^n \left(  \nabla^2 f_i(\vtheta^{(t)}) - \nabla^2 f_i(\vtheta^*)  \right) \right\|^2_{\rm F} \\
&\leq \frac{1}{n}\sum_{i=1}^n \left\|\nabla^2 f_i(\vtheta^{(t)}) - \nabla^2 f_i(\vtheta^*) \right\|^2_{\rm F} := {\mathcal H}^{(t)}. 
\end{align*}
Integrating the above inequalities results in 
\begin{align}\label{eq:xk+1U}
&\left\|\vtheta^{(t+1)} - \vtheta^* \right\|^2 \leq \frac{2}{\mu^2} \left\|\vtheta^{(t)}-\vtheta^* \right\|^2 {\mathcal H}^{(t)}\\
& \hspace{140pt}+ \frac{L_*^2}{2\mu^2} \left\|\vtheta^{(t)}-\vtheta^* \right\|^4. \notag
\end{align}
When satisfying Condition \ref{condition1}; namely, $\left\|\vtheta^{(0)}-\vtheta^* \right\|^2 \leq \frac{\mu^2}{2L_*^2}$ and ${\mathcal H}^{(t)} \leq \frac{\mu^2}{8}$ for all $t\geq 0$, we show that $\left\|\vtheta^{(t)}-\vtheta^* \right\|^2 \leq \frac{\mu^2}{2L_*^2}$ for all $t\geq 0$ by induction. From $\left\|\vtheta^{(t)}-\vtheta^*\right\|^2 \leq \frac{\mu^2}{2L_*^2}$ for all $t \leq T$ and (\ref{eq:xk+1U}), we have 
\begin{align*}
&\left\|\vtheta^{(T+1)} - \vtheta^* \right\|^2 \\
&\leq \frac{1}{4} \left\|\vtheta^{(T)}-\vtheta^* \right\|^2 + \frac{1}{4} \left\|\vtheta^{(T)}-\vtheta^* \right\|^2 \leq \frac{\mu^2}{2L_*^2}. 
\end{align*} 
Thus, we have $\left\|\vtheta^{(t)}-\vtheta^*\right\|^2 \leq \frac{\mu^2}{2L_*^2}$ and ${\mathcal H}^{(t)} \leq \frac{\mu^2}{8}$ for $t\geq 0$. Using (\ref{eq:xk+1U}) again, we obtain 
\begin{equation}\label{eq:xk+1Ufix}
\left\|\vtheta^{(t+1)} - \vtheta^*\right\|^2 \leq \frac{1}{2} \left\|\vtheta^{(t)}-\vtheta^* \right\|^2. 
\end{equation}
From Assumption \ref{assumption2}, we have
$$
\mathbb{E} \left\|\nabla^2 f_i(\vtheta^{(t)}) - \nabla^2 f_i(\vtheta^*) \right\|^2_{\rm F} \leq L_F^2 \left\|\vtheta^{(t)}-\vtheta^* \right\|^2. 
$$
Then, we have 
\begin{align*}
\mathbb{E}[{\mathcal H}^{(t+1)}] &\leq L_F^2 \left\|\vtheta^{(t+1)}-\vtheta^* \right\|^2 \\
&\leq \frac{L_F^2}{2}\left\|\vtheta^{(t)}-\vtheta^* \right\|^2. 
\end{align*}
Using the above inequality and (\ref{eq:xk+1Ufix}), for the Lyapunov function
\begin{align*}
&\Phi^{(t)}=\frac{1}{n} \sum_{i=1}^n\left\|\nabla^2 f_i(\vtheta^{(t)})-\nabla^2 f_i\left(\vtheta^*\right)\right\|_{\rm F}^2\\
&\hspace{140pt}+6 L_{F}^2\left\|\vtheta^{(t)}-\vtheta^*\right\|^2,
\end{align*}
we deduce the following inequality.
\begin{align*}
\mathbb{E}\left[\Phi^{(t+1)}\right] & \leq \frac{L_F^2}{2} \left\|\vtheta^{(t)}-\vtheta^*\right\|^2 + 3L_F^2 \left\|\vtheta^{(t)}-\vtheta^*\right\|^2 \\ 
& \leq  \frac{7}{12} \left( {\mathcal H}^{(t)} + 6L_F^2 \left\|\vtheta^{(t)}-\vtheta^*\right\|^2 \right) \\ 
& = \frac{7}{12} \Phi^{(t)}. 
\end{align*}
Hence, $\mathbb{E}[\Phi^{(t)}] \leq \left(  \frac{7}{12}  \right)^t \Phi^{(0)}$. We further have $\mathbb{E}[{\mathcal H}^{(t)}] \leq \left(  \frac{7}{12}  \right)^t \Phi^{(0)}$ and $\mathbb{E}\left[ \left\|\vtheta^{(t)}-\vtheta^*\right\|^2\right] \leq \frac{1}{6L_F^2} \left(  \frac{7}{12}  \right)^t \Phi^{(0)}$ for $t\geq 0$. Assume $\vtheta^{(t)}\neq \vtheta^*$ for all $t$. Then from (\ref{eq:xk+1U}), we have 
$$
\frac{\|\vtheta^{(t+1)}-\vtheta^*\|^2}{\|\vtheta^{(t)}-\vtheta^*\|^2} \leq \frac{2}{\mu^2}{\mathcal H}^{(t)} + \frac{L_*^2}{2\mu^2}\left\|\vtheta^{(t)}-\vtheta^*\right\|^2, 
$$
and by taking expectation, we have 
\begin{align*}
&\mathbb{E} \left[  \frac{\|\vtheta^{(t+1)}-\vtheta^*\|^2}{\|\vtheta^{(t)}-\vtheta^*\|^2}  \right] \\
& \leq \frac{2}{\mu^2} \mathbb{E}[{\mathcal H}^{(t)}] + \frac{L_*^2}{2\mu^2} \mathbb{E}\left[\left\|\vtheta^{(t)}-\vtheta^*\right\|^2 \right] \\ 
& \leq  \left(  \frac{7}{12}  \right)^t \left(  2 + \frac{L_*^2}{12L_F^2}  \right) \frac{\Phi^{(0)}}{\mu^2}. 
\end{align*}
\end{proof}

\section{Further discussion on the novelty of FedPM}
\label{secap:distinction}

A potential misinterpretation of FedPM, particularly when multiple local updates ($K>1$) are involved, is to view it as a simple sequence of local second-order updates followed by a single global update step akin to SOGM methods. However, this view overlooks the unique role of \textit{preconditioned mixing}. To clarify the novelty of our approach, we explicitly formulate and contrast the FedPM update rule with this hypothetical alternative.

The global model update in FedPM, by expanding Eq. (10), is given by:
\begin{align}
&\vtheta^{(t+1)} = \frac{1}{N} \sum_{i=1}^N (\mP^{(t)})^{-1} \mP_i^{(t,K-1)} \vtheta_i^{(t,K)} \notag \\
&= \frac{1}{N} \sum_{i=1}^N (\mP^{(t)})^{-1} \mP_i^{(t,K-1)} \notag \\
& \hspace{70pt}\left(\vtheta^{(t)} - \eta \sum_{k=0}^{K-1} (\mP_i^{(t,k)})^{-1} \nabla f_i(\vtheta_i^{(t,k)}) \right) \notag \\
&= \vtheta^{(t)} - \frac{\eta}{N} \sum_{i=1}^N (\mP^{(t)})^{-1} \mP_i^{(t,K-1)} \notag\\
&\hspace{100pt}\sum_{k=0}^{K-1} (\mP_i^{(t,k)})^{-1} \nabla f_i(\vtheta_i^{(t,k)}). \label{eq:fedpm_expanded}
\end{align}
This formulation shows that the global update incorporates a weighted sum of all local updates performed during the round. Crucially, each term in the inner sum, $(\mP_i^{(t,k)})^{-1} \nabla f_i(\vtheta_i^{(t,k)})$, is a locally preconditioned gradient from a specific local step $k$. The server then applies a preconditioned mixing, using both the global preconditioner $\mP^{(t)}$ and the clients' final local preconditioners $\mP_i^{(t,K-1)}$, to this entire history of local updates.

In contrast, a method that performs $K-1$ local second-order updates and concludes with a single SOGM-like global update would have the following form:
\begin{equation}
\vtheta^{(t+1)} = \vtheta^{(t)} - \eta (\mP^{(t)})^{-1} \left( \frac{1}{N} \sum_{i=1}^N \nabla f_i(\vtheta_i^{(t,K-1)}) \right). \label{eq:sogm_multi}
\end{equation}
Here, the global preconditioner $(\mP^{(t)})^{-1}$ is applied to the average of local gradients, but these gradients are evaluated \textit{only} at the final local iterate $\vtheta_i^{(t,K-1)}$. This hypothetical update completely disregards the trajectory of intermediate local updates and the sequence of local preconditioners $(\mP_i^{(t,k)})$ used to generate them.

\begin{figure*}[t]
    \centering
    \includegraphics[width=0.75\paperwidth]{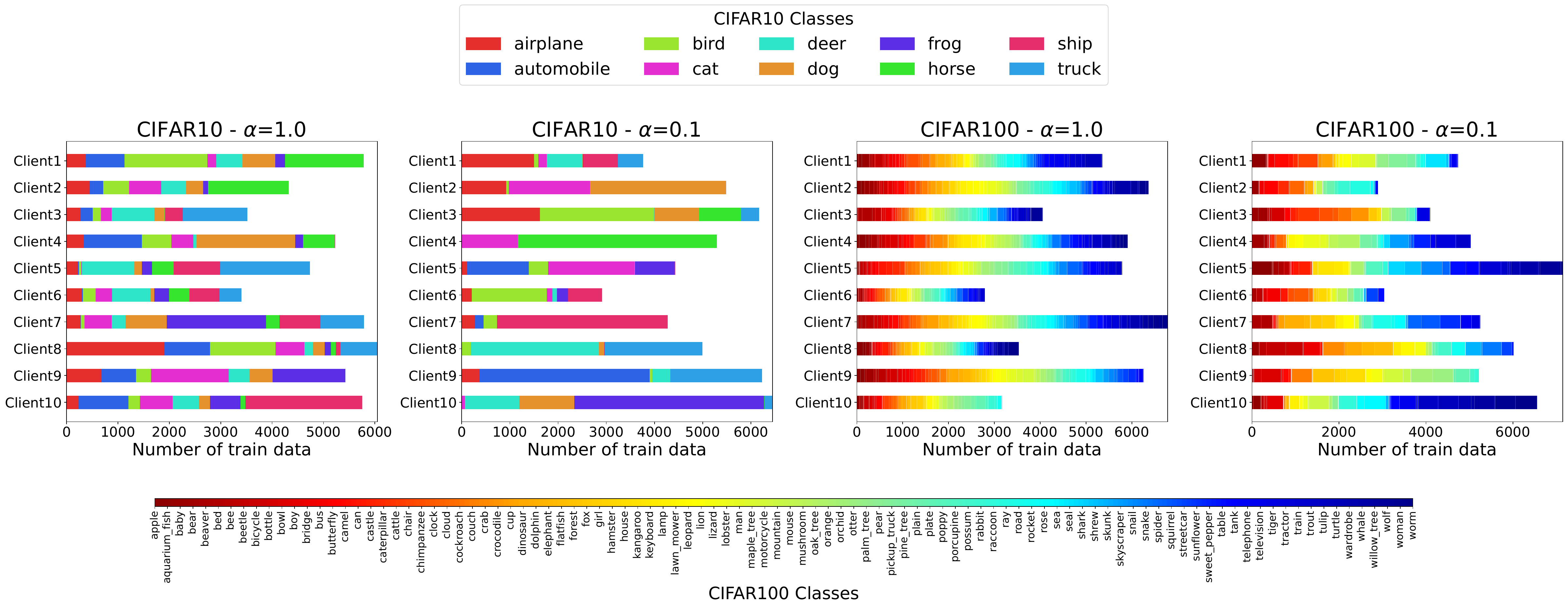}
    \caption{Data distributions of CIFAR10 and CIFAR100 under $\alpha$ = 1.0 and $\alpha$ = 0.1 for a single random seed. Different colors represent different classes.}
    \label{fig:data_distribution}
\end{figure*}

Comparing Eq. (\ref{eq:fedpm_expanded}) and Eq. (\ref{eq:sogm_multi}) makes it clear that the two update rules are not equivalent. The key distinction lies in how historical information from the local training phase is aggregated and utilized. FedPM's preconditioned mixing provides a more sophisticated aggregation that retains information from the entire local optimization path. We argue that this idea of preconditioned mixing, though simple in form, represents a meaningful and previously overlooked design choice in the context of second-order FL optimization.

\section{Detailed Settings of Numerical experiments (Sec. \ref{sec4})}
\label{secap:settings}
\subsection{Computing resources}
We conducted our experiments using two different computing environments: Test 1, which involved a single-GPU setup, and Test 2, which utilized a multi-node distributed system. Below, we provide detailed specifications for both setups.

In Test 1, we used a machine equipped with an NVIDIA GeForce RTX 2080 Ti GPU and an Intel Xeon Silver 4215 CPU with 8 cores. The system had 45 GB of RAM and ran on Ubuntu 24.04. The software environment included Python 3.10.9, PyTorch 2.1.0, and CUDA 11.8.

For Test 2, we used a distributed system consisting of three compute nodes, each equipped with four NVIDIA H100 SXM5 GPUs with 94GB of HBM2e memory, connected via NVLink. Each node had two AMD EPYC 9654 CPUs clocked at 2.4 GHz, with 96 cores per CPU, and 768 GB of DDR5-4800 RAM. The nodes were interconnected using four InfiniBand NDR200 links, each providing 200 Gbps bandwidth. Each client and the server in experiments were allocated a single GPU to simulate FL setting. This setup ran on Red Hat Enterprise Linux Server 9.3 with the same software environment as Test 1: Python 3.10.9, PyTorch 2.1.0, and CUDA 11.8.

\subsection{Data distributions}
In Test 2 in Sec. \ref{subsec:test2}, we partitioned data among clients following the approach described in \cite{vogels2021relaysum}. This method utilizes a Dirichlet distribution to control the level of data heterogeneity across clients. The heterogeneity is governed by the Dirichlet concentration parameter $\alpha$, where a smaller $\alpha$ results in a more imbalanced and heterogeneous data distribution, while a larger $\alpha$ yields a more uniform allocation. The data distributions for CIFAR10 and CIFAR100 are depicted in Figure \ref{fig:data_distribution}.


\subsection{Hyperparameter tuning}

We performed systematic hyperparameter tuning to ensure optimal performance for each method across the different experimental settings.

\noindent
For \textbf{all experiments}:
\begin{itemize}
    \item The momentum for \textbf{FedAvgM} was tuned from $\{0.7, 0.9, 0.99\}$.
    \item The global learning rate for \textbf{SCAFFOLD} was fixed to 1.0.
    \item For \textbf{FedAdam}, $\beta_1$, $\beta_2$, and $\tau$ were fixed at 0.9, 0.99, and 0.001, respectively, while the server learning rate was tuned from $\{0.05, 0.03, 0.01\}$.
\end{itemize}

For \textbf{Test 1}, which involved a strongly convex model, we evaluated using only a single seed, as there was no randomness in the experiment due to the fixed data distribution. The learning rate was tuned exhaustively for each method to find the optimal value. For specific second-order methods in this test:
\begin{itemize}
    \item The sketch size for \textbf{FedNS} was set to be the same as the model dimension.
    \item For \textbf{FedNL}, no compression was applied, and the Hessian learning rate was fixed to 1.
    \item No damping term was added to the Hessian, as it was guaranteed to be positive definite.
\end{itemize}

\begin{figure*}[t]
\centering
\includegraphics[width=0.75\paperwidth]{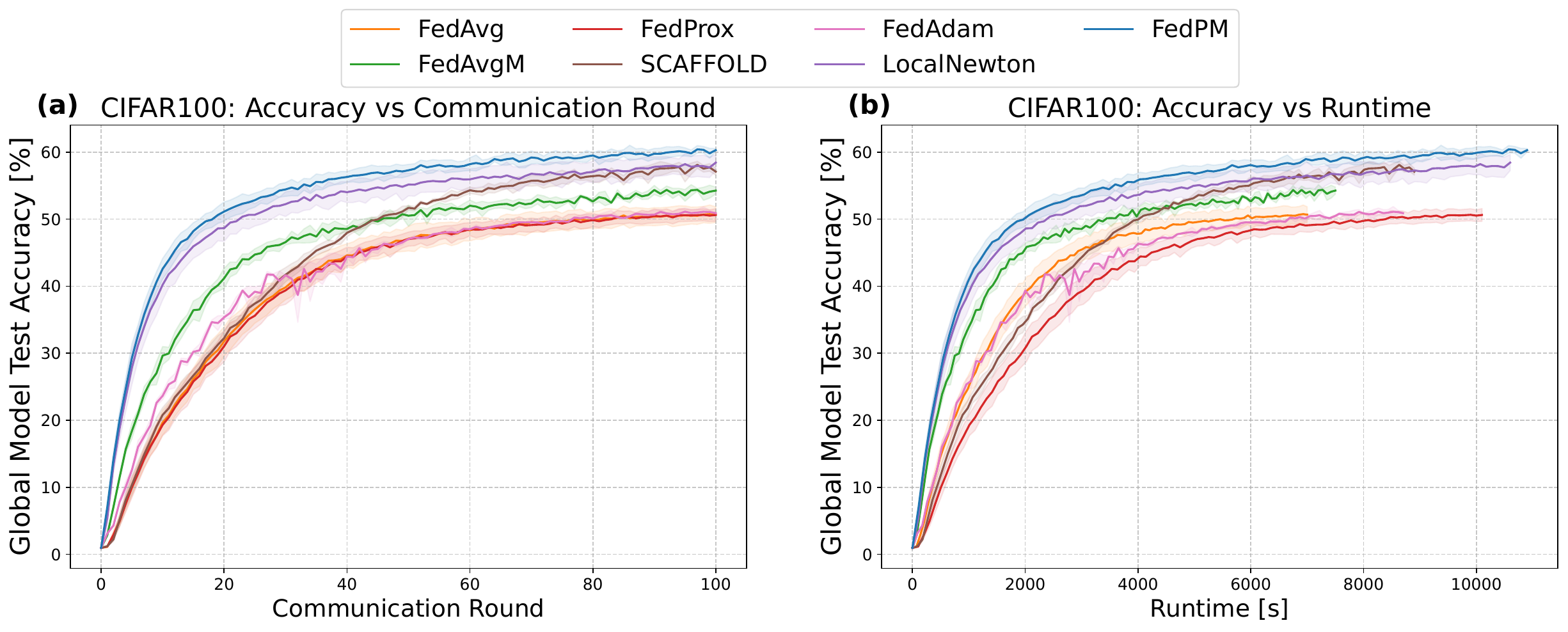}
\caption{Convergence curves for Test 2 using the global model on CIFAR100 with a heterogeneity level of $\alpha=0.1$ and 5 local epochs. (a) depicts test accuracy against communication rounds, while (b) shows test accuracy against runtime. The shaded areas represent one standard deviation across three random seeds. These results correspond to the settings for CIFAR100 with $\alpha=0.1$ reported in Table 3.}
\label{fig:appendix_cifar100_curves}
\end{figure*}

\begin{figure}
    \centering
    \includegraphics[width=\linewidth]{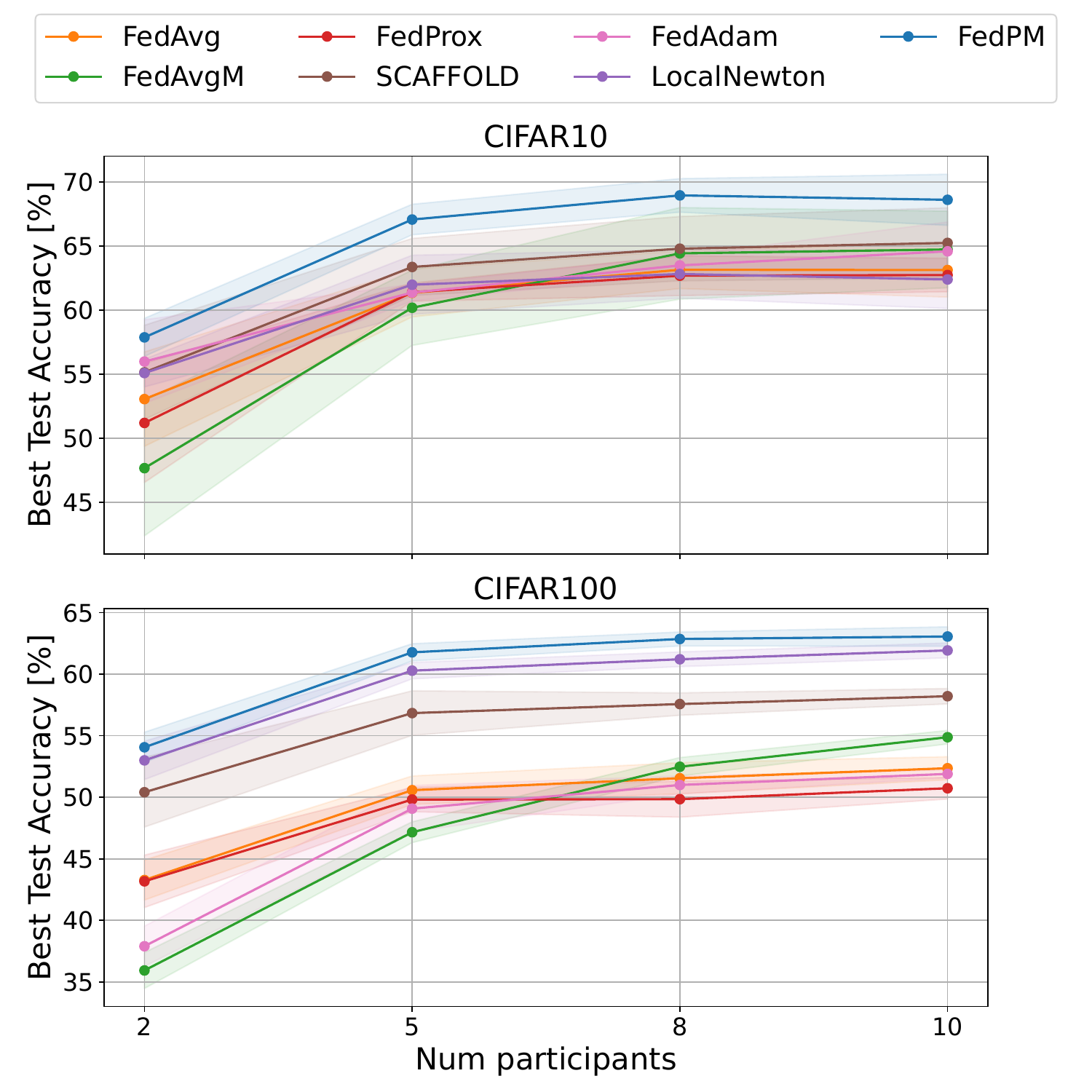}
    \caption{Investigation of the impact of client sampling on test accuracy for CIFAR10 (upper plot and CIFAR100 (lower plot) with $\alpha$ = 0.1 and 5 inner epochs. The plot shows the highest average of best test accuracies across three random seeds for different numbers of participating clients. The shaded area represents one standard deviation.}
    \label{fig:client_sampling}
\end{figure}

For \textbf{Test 2}, which involved non-convex deep learning models, we conducted a more extensive search. The batch size was fixed at 64 for all experiments.
\begin{itemize}
    \item \textbf{Learning Rate}: Selected from $\{0.8, 0.5, 0.3, 0.1, 0.05,$\\$ 0.03, 0.01\}$.
    \item \textbf{Gradient Clipping}: The maximum norm value was tuned from $\{1.0, \text{no clipping}\}$.
    \item \textbf{Weight Decay}: Chosen from $\{0.01, 0.0001,$\\$ 0.0 \text{ (no weight decay)}\}$.
    \item \textbf{FedProx}: The regularization coefficient $\mu$ was tuned from $\{0.01, 0.001, 0.0001\}$.
    \item \textbf{LocalNewton and FedPM}: The damping term, used to stabilize matrix inversion, was selected from $\{1.0, 0.01, 0.0001\}$.
\end{itemize}

Hyperparameter tuning for Test 2 was conducted using three different random seeds for each experimental setting. The best hyperparameters were determined based on the average of the highest test accuracy achieved by the global model during training for each random seed. The final tuned hyperparameters are described from Table \ref{tab:hp_cifar10_a1_0} to Table \ref{tab:hp_foof_samples}.

\begin{table*}
\centering
\setlength{\tabcolsep}{1mm}
{\fontsize{9}{\baselineskip}\selectfont
\begin{tabular}{lcccccc}
\toprule
\textbf{Method} & \textbf{Learning Rate} & \textbf{Grad. Clip} & \textbf{Weight Decay} & \textbf{Momentum} & \textbf{Damping Term} & \textbf{Server LR} \\
\midrule
FedAvg & 0.1 & No & 0.0001 & - & - & - \\
FedAvgM & 0.1 & 1.0 & 0.0001 & 0.9 & - & - \\
FedProx & 0.3 & 1.0 & 0.001 ($\mu$) & - & - & - \\
SCAFFOLD & 0.1 & No & 0.0001 & - & - & - \\
FedAdam & 0.05 & No & 0.0001 & - & - & 0.03 \\
LocalNewton & 0.3 & No & 0.0001 & - & 1.0 & - \\
FedPM & 0.3 & No & 0.0001 & - & 1.0 & - \\
\bottomrule
\end{tabular}}
\caption{Final hyperparameters for CIFAR10, $\alpha$=1.0 (Table \ref{tab:accuracy_table}).}
\label{tab:hp_cifar10_a1_0}
\end{table*}

\begin{table*}
\centering
\setlength{\tabcolsep}{1mm}
{\fontsize{9}{\baselineskip}\selectfont
\begin{tabular}{lcccccc}
\toprule
\textbf{Method} & \textbf{Learning Rate} & \textbf{Grad. Clip} & \textbf{Weight Decay} & \textbf{Momentum} & \textbf{Damping Term} & \textbf{Server LR} \\
\midrule
FedAvg & 0.1 & No & 0.0 & - & - & - \\
FedAvgM & 0.1 & 1.0 & 0.0001 & 0.9 & - & - \\
FedProx & 0.05 & No & 0.001 ($\mu$) & - & - & - \\
SCAFFOLD & 0.1 & No & 0.0001 & - & - & - \\
FedAdam & 0.05 & No & 0.0001 & - & - & 0.03 \\
LocalNewton & 0.3 & 1.0 & 0.0 & - & 0.0001 & - \\
FedPM & 0.5 & 1.0 & 0.0001 & - & 1.0 & - \\
\bottomrule
\end{tabular}}
\caption{Final hyperparameters for CIFAR10, $\alpha$=0.1 (Table \ref{tab:accuracy_table} and Figure \ref{fig:curve}).}
\label{tab:hp_cifar10_a0_1}
\end{table*}

\begin{table*}
\centering
\setlength{\tabcolsep}{1mm}
{\fontsize{9}{\baselineskip}\selectfont
\begin{tabular}{lcccccc}
\toprule
\textbf{Method} & \textbf{Learning Rate} & \textbf{Grad. Clip} & \textbf{Weight Decay} & \textbf{Momentum} & \textbf{Damping Term} & \textbf{Server LR} \\
\midrule
FedAvg & 0.3 & 1.0 & 0.0001 & - & - & - \\
FedAvgM & 0.1 & 1.0 & 0.0001 & 0.9 & - & - \\
FedProx & 0.3 & 1.0 & 0.01 ($\mu$) & - & - & - \\
SCAFFOLD & 0.3 & 1.0 & 0.0001 & - & - & - \\
FedAdam & 0.03 & No & 0.0001 & - & - & 0.01 \\
LocalNewton & 0.3 & No & 0.0001 & - & 1.0 & - \\
FedPM & 0.3 & No & 0.0001 & - & 1.0 & - \\
\bottomrule
\end{tabular}}
\caption{Final hyperparameters for CIFAR100, $\alpha$=1.0 (Table \ref{tab:accuracy_table}).}
\label{tab:hp_cifar100_a1_0}
\end{table*}

\begin{table*}
\centering
\setlength{\tabcolsep}{1mm}
{\fontsize{9}{\baselineskip}\selectfont
\begin{tabular}{lcccccc}
\toprule
\textbf{Method} & \textbf{Learning Rate} & \textbf{Grad. Clip} & \textbf{Weight Decay} & \textbf{Momentum} & \textbf{Damping Term} & \textbf{Server LR} \\
\midrule
FedAvg & 0.3 & No & 0.0001 & - & - & - \\
FedAvgM & 0.1 & 1.0 & 0.0001 & 0.9 & - & - \\
FedProx & 0.3 & No & 0.001 ($\mu$) & - & - & - \\
SCAFFOLD & 0.3 & No & 0.0001 & - & - & - \\
FedAdam & 0.01 & No & 0.0001 & - & - & 0.03 \\
LocalNewton & 0.5 & 1.0 & 0.0 & - & 1.0 & - \\
FedPM & 0.5 & 1.0 & 0.0001 & - & 1.0 & - \\
\bottomrule
\end{tabular}}
\caption{Final hyperparameters for CIFAR100, $\alpha$=0.1 (Table \ref{tab:accuracy_table} and Figure \ref{fig:appendix_cifar100_curves}).}
\label{tab:hp_cifar100_a0_1}
\end{table*}

\begin{table*}
\centering
\setlength{\tabcolsep}{1mm}
{\fontsize{9}{\baselineskip}\selectfont
\begin{tabular}{lcccccc}
\toprule
\textbf{Method} & \textbf{Learning Rate} & \textbf{Grad. Clip} & \textbf{Weight Decay} & \textbf{Momentum} & \textbf{Damping Term} & \textbf{Server LR} \\
\midrule
FedAvg & 0.05 & No & 0.0001 & - & - & - \\
FedAvgM & 0.1 & 1.0 & 0.0001 & 0.99 & - & - \\
FedProx & 0.1 & No & 0.001 ($\mu$) & - & - & - \\
SCAFFOLD & 0.3 & 1.0 & 0.0001 & - & - & - \\
FedAdam & 0.05 & No & 0.0001 & - & - & 0.05 \\
LocalNewton & 0.3 & 1.0 & 0.0001 & - & 1.0 & - \\
FedPM & 0.3 & 1.0 & 0.0001 & - & 0.01 & - \\
\bottomrule
\end{tabular}}
\caption{Final hyperparameters for CIFAR10 ($\alpha$=0.1) with 1 local epoch (Figure \ref{fig:inner_epoch_result}).}
\label{tab:hp_cifar10_e1}
\end{table*}

\begin{table*}
\centering
\setlength{\tabcolsep}{1mm}
{\fontsize{9}{\baselineskip}\selectfont
\begin{tabular}{lcccccc}
\toprule
\textbf{Method} & \textbf{Learning Rate} & \textbf{Grad. Clip} & \textbf{Weight Decay} & \textbf{Momentum} & \textbf{Damping Term} & \textbf{Server LR} \\
\midrule
FedAvg & 0.05 & No & 0.0001 & - & - & - \\
FedAvgM & 0.1 & 1.0 & 0.0001 & 0.9 & - & - \\
FedProx & 0.1 & 1.0 & 0.001 ($\mu$) & - & - & - \\
SCAFFOLD & 0.1 & No & 0.0001 & - & - & - \\
FedAdam & 0.03 & No & 0.0001 & - & - & 0.05 \\
LocalNewton & 0.3 & 1.0 & 0.0 & - & 0.01 & - \\
FedPM & 0.3 & No & 0.0001 & - & 1.0 & - \\
\bottomrule
\end{tabular}}
\caption{Final hyperparameters for CIFAR10 ($\alpha$=0.1) with 10 local epochs (Figure \ref{fig:inner_epoch_result}).}
\label{tab:hp_cifar10_e10}
\end{table*}

\begin{table*}
\centering
\setlength{\tabcolsep}{1mm}
{\fontsize{9}{\baselineskip}\selectfont
\begin{tabular}{lcccc}
\toprule
& \multicolumn{4}{c}{\textbf{Number of Participating Clients per Round}} \\
\cmidrule(lr){2-5}
\textbf{Method} & \textbf{2} & \textbf{5} & \textbf{8} & \textbf{10} \\
\midrule
FedAvg & (0.05, No, 0.0001) & (0.05, No, 0.0) & (0.05, No, 0.0001) & (0.1, No, 0.0) \\
FedAvgM & (0.1, 1.0, 0.0001, 0.7) & (0.1, 1.0, 0.0001, 0.7) & (0.1, 1.0, 0.0001, 0.9) & (0.1, 1.0, 0.0001, 0.9) \\
FedProx & (0.1, No, 0.01) & (0.1, No, 0.001) & (0.1, No, 0.001) & (0.05, No, 0.001) \\
SCAFFOLD & (0.1, 1.0, 0.0001) & (0.1, No, 0.0001) & (0.1, No, 0.0001) & (0.1, No, 0.0001) \\
LocalNewton & (0.1, No, 0.0001, 1.0) & (0.5, No, 0.0, 1.0) & (0.5, 1.0, 0.0, 0.01) & (0.3, 1.0, 0.0, 0.0001) \\
FedPM & (0.3, 1.0, 0.0001, 1.0) & (0.3, 1.0, 0.0001, 1.0) & (0.3, 1.0, 0.0001, 1.0) & (0.5, 1.0, 0.0001, 1.0) \\
\bottomrule
\end{tabular}}
\caption{Final hyperparameters for client sampling experiments on CIFAR10 (Figure \ref{fig:client_sampling}). The values in parentheses represent (Learning Rate, Grad. Clip, Weight Decay/$\mu$). For FedAvgM, the fourth value is the momentum. For LocalNewton and FedPM, the fourth value is the damping term.}
\label{tab:hp_clientsamp_cifar10}
\end{table*}

\begin{table*}
\centering
\setlength{\tabcolsep}{1mm}
{\fontsize{9}{\baselineskip}\selectfont
\begin{tabular}{lcccc}
\toprule
& \multicolumn{4}{c}{\textbf{Number of Participating Clients per Round}} \\
\cmidrule(lr){2-5}
\textbf{Method} & \textbf{2} & \textbf{5} & \textbf{8} & \textbf{10} \\
\midrule
FedAvg & (0.3, No, 0.0001) & (0.3, No, 0.0001) & (0.5, No, 0.0001) & (0.3, No, 0.0001) \\
FedAvgM & (0.1, No, 0.0, 0.7) & (0.1, 1.0, 0.0, 0.7) & (0.1, 1.0, 0.0001, 0.9) & (0.1, 1.0, 0.0001, 0.9) \\
FedProx & (0.3, No, 0.001) & (0.3, No, 0.001) & (0.3, No, 0.001) & (0.3, No, 0.001) \\
SCAFFOLD & (0.5, No, 0.0001) & (0.5, No, 0.0001) & (0.3, No, 0.0001) & (0.3, No, 0.0001) \\
LocalNewton & (0.5, 1.0, 0.0, 1.0) & (0.5, 1.0, 0.0, 1.0) & (0.5, 1.0, 0.0, 1.0) & (0.5, 1.0, 0.0, 1.0) \\
FedPM & (0.5, 1.0, 0.0, 1.0) & (0.5, 1.0, 0.0, 1.0) & (0.5, 1.0, 0.0, 1.0) & (0.5, 1.0, 0.0001, 1.0) \\
\bottomrule
\end{tabular}}
\caption{Final hyperparameters for client sampling experiments on CIFAR100 (Figure \ref{fig:client_sampling}). The values in parentheses represent (Learning Rate, Grad. Clip, Weight Decay/$\mu$). For FedAvgM, the fourth value is the momentum. For LocalNewton and FedPM, the fourth value is the damping term.}
\label{tab:hp_clientsamp_cifar100}
\end{table*}

\begin{table*}
\centering
\setlength{\tabcolsep}{1mm}
{\fontsize{9}{\baselineskip}\selectfont
\begin{tabular}{lcccccc}
\toprule
\textbf{Method} & \textbf{Learning Rate} & \textbf{Grad. Clip} & \textbf{Weight Decay} & \textbf{Momentum} & \textbf{Damping Term} & \textbf{Server LR} \\
\midrule
FedAvg & 0.1 & No & 0.0001 & - & - & - \\
FedAvgM & 0.5 & 1.0 & 0.0001 & 0.7 & - & - \\
FedProx & 0.1 & No & 0.01 ($\mu$) & - & - & - \\
SCAFFOLD & 0.5 & 1.0 & 0.0001 & - & - & - \\
FedAdam & 0.05 & No & 0.0001 & - & - & 0.03 \\
LocalNewton & 0.5 & 1.0 & 0.0001 & - & 1.0 & - \\
FedPM & 0.5 & 1.0 & 0.0001 & - & 1.0 & - \\
\bottomrule
\end{tabular}}
\caption{Final hyperparameters for FEMNIST (Table \ref{tab:femnist}).}
\label{tab:hp_femnist}
\end{table*}

\begin{table*}
\centering
\setlength{\tabcolsep}{1mm}
{\fontsize{9}{\baselineskip}\selectfont
\begin{tabular}{llcccc}
\toprule
\textbf{Dataset} & \textbf{\# Samples} & \textbf{Learning Rate} & \textbf{Grad. Clip} & \textbf{Weight Decay} & \textbf{Damping Term} \\
\midrule
CIFAR10 & 64 & 0.5 & 1.0 & 0.0001 & 1.0 \\
& 256 & 0.5 & No & 0.0001 & 1.0 \\
& 1024 & 0.5 & No & 0.0 & 1.0 \\
& full & 0.5 & 1.0 & 0.0001 & 1.0 \\
\midrule
CIFAR100 & 64 & 0.3 & 1.0 & 0.0001 & 1.0 \\
& 256 & 0.3 & 1.0 & 0.0001 & 1.0 \\
& 1024 & 0.3 & 1.0 & 0.0001 & 1.0 \\
& full & 0.5 & 1.0 & 0.0001 & 1.0 \\
\bottomrule
\end{tabular}}
\caption{Final hyperparameters for FedPM with varying sample sizes for FOOF computation (Figure \ref{fig:samples_runtime}).}
\label{tab:hp_foof_samples}
\end{table*}

\subsection{Statistical Significance}
We use the Mann-Whitney U test \cite{mann1947test} to test the statistical significance of FedPM against other methods across three seeds. Table \ref{tab:p_values} summarizes the results by showing the maximum p-value obtained when comparing the best test accuracy of FedPM against that of every other method in each setting. A lower value indicates a more significant performance improvement by FedPM.

\begin{table*}[h!]
\centering
\setlength{\tabcolsep}{1mm}
\begin{tabular}{lc}
\toprule
\textbf{Experimental Setting} & \textbf{Maximum p-value} \\
\midrule
\multicolumn{2}{l}{\textbf{Main Experiments (Table \ref{tab:accuracy_table})}} \\
CIFAR10, $\alpha$=1.0 & 0.20 \\
CIFAR100, $\alpha$=1.0 & 0.80 (0.10 excluding LocalNewton) \\
CIFAR10, $\alpha$=0.1 (Fig. \ref{fig:curve}) & 0.20 \\
CIFAR100, $\alpha$=0.1 (Fig. \ref{fig:appendix_cifar100_curves}) & 0.20 \\
\midrule
\multicolumn{2}{l}{\textbf{Varying Local Epochs (Figure \ref{fig:inner_epoch_result})}} \\
CIFAR10, 1 local epoch & 0.50 \\
CIFAR10, 10 local epochs & 0.20 \\
\midrule
\multicolumn{2}{l}{\textbf{Client Sampling (Figure \ref{fig:client_sampling})}} \\
CIFAR10, 2 participants & 0.20 \\
CIFAR10, 5 participants & 0.05 \\
CIFAR10, 8 participants & 0.05 \\
CIFAR100, 2 participants & 0.35 \\
CIFAR100, 5 participants & 0.10 \\
CIFAR100, 8 participants & 0.05 \\
\bottomrule
\end{tabular}
\caption{Maximum p-value from the Mann-Whitney U test when comparing FedPM against all other methods for each experimental setting.}
\label{tab:p_values}
\end{table*}

\section{Additional experimental results}
\label{secap:addexp}
\subsection{Convergence Curves on CIFAR100 classification}

In connection with Sec. \ref{subsec:test2}, this section provides the convergence curves for the experiments on the CIFAR100 classification. Figure~\ref{fig:appendix_cifar100_curves} illustrates the test accuracy of the global model versus both communication rounds and runtime. The experimental setup corresponds to the results presented in Table \ref{tab:accuracy_table} for CIFAR100 with a high level of data heterogeneity ($\alpha=0.1$) and 5 local update epochs per round.

The results shown in the figure reinforce the conclusions drawn from the CIFAR10 experiments. FedPM not only converges to a significantly higher test accuracy but also does so more rapidly than the other methods, both in terms of communication rounds and wall-clock time. This further validates the effectiveness of the preconditioned mixing strategy in complex, non-convex optimization scenarios under challenging heterogeneous data distributions.

\subsection{Empirical investigation of negative impacts regarding client sampling}

In FL, client sampling is generally employed. However, we omit it throughout the main paper to hold simple notation and theoretical analysis. 
In this section, we empirically investigate the negative impacts of client sampling on the performance of various FL methods. We used the same experimental settings as in the main paper experiments; the number of clients participating in training per round varied from $2$ to $10$. The experimental results are summarized in Figure~\ref{fig:client_sampling}.

As expected, all methods exhibited performance degradation as the number of participating clients decreased. Reducing the number of participating clients per round leads to a noticeable decline in test accuracy across all methods. This degradation can be attributed to the increased variance in model updates caused by less representative client selections, which slows down convergence and impacts generalization. Among the evaluated methods, our FedPM demonstrated the highest robustness to client sampling. Even when only two clients participated per round, FedPM achieved a higher accuracy compared to other methods under the same conditions. This suggests that FedPM effectively mitigates the adverse effects of limited client participation by stabilizing model updates using preconditioned mixing.



\begin{figure}[t]
    \centering
    \includegraphics[width=\linewidth]{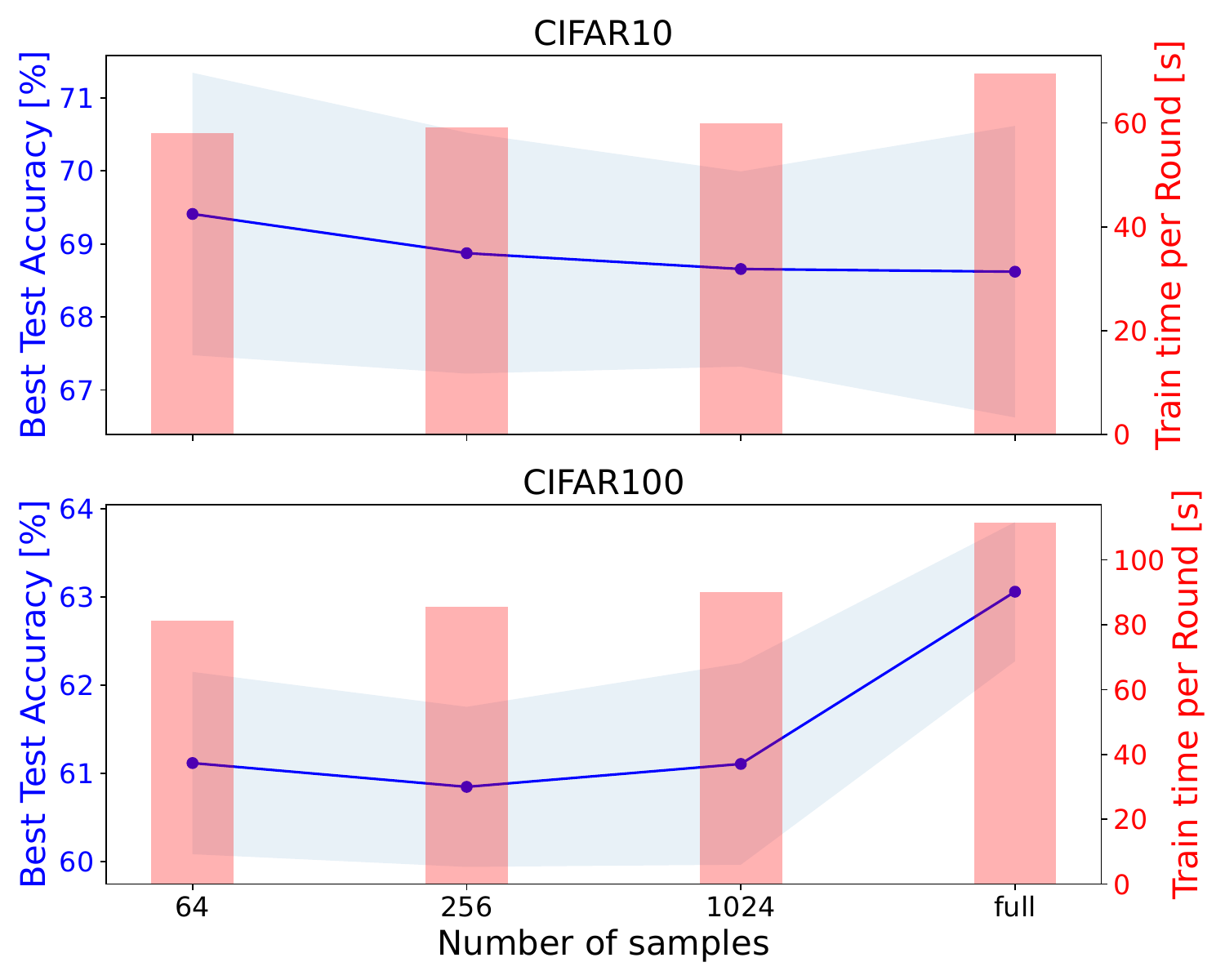}
    \caption{Effect of varying the number of local data samples used to compute FOOF matrices in FedPM. The line plot shows the best test accuracy (mean $\pm$ one standard deviation over three random seeds). The bar plot represents the total training time across all clients per round averaged over all runs. Experiments were conducted using 64, 256, and 1024 samples, in addition to using the full local dataset as done in Figure \ref{fig:curve}.}
    \label{fig:samples_runtime}
\end{figure}

\subsection{Evaluation on a Real-World Federated Dataset}
We evaluated FedPM on the FEMNIST dataset \cite{caldas2019leaf}. FEMNIST is derived from the Extended MNIST dataset and is naturally partitioned by the 3,597 different writers of the characters, creating a realistic, user-level non-IID data distribution. We sampled 10 clients per round and ran for 100 rounds with 5 inner epochs between them.
Table \ref{tab:femnist} shows the best test accuracy achieved. In this challenging, real-world setting, FedPM again demonstrates superior performance, outperforming strong baselines. This result validates that the benefits of preconditioned mixing are not limited to synthetic partitions but extend to practical, highly heterogeneous federated scenarios.

\begin{table}[t]
\centering
\begin{tabular}{lc}
\toprule
\textbf{Method} & \textbf{Best Test Accuracy [\%]} \\
\midrule
FedAvg & 75.83$\pm$1.90 \\
FedAvgM & 78.46$\pm$0.82 \\
FedProx & 76.42$\pm$0.43 \\
SCAFFOLD & 78.30$\pm$2.72 \\
FedAdam & 76.85$\pm$1.46 \\
LocalNewton & 77.36$\pm$1.12 \\
\textbf{FedPM (ours)} & \textbf{79.21}$\pm$1.69 \\
\bottomrule
\end{tabular}
\caption{Average best test accuracy of global parameter (over three random seeds) for each method on FEMNIST dataset, where local updates are performed for 5 epochs before each communication. The standard deviation across different seeds is also shown for each.}
\label{tab:femnist}
\end{table}

\subsection{Effect of Sample Size on FOOF Matrix Computation in FedPM}
The computational cost of second-order methods like FedPM is influenced by the number of local data samples used to compute the FOOF matrices. To assess its impact, we conducted experiments varying the number of samples while keeping all other settings identical to Figure \ref{fig:curve}. Specifically, we evaluated FedPM using $64$, $256$, and $1024$ samples for matrix computation, in addition to using the full local dataset as done in Figure \ref{fig:curve}.

Figure \ref{fig:samples_runtime} presents the results. The line plot shows the best test accuracy, averaged over three random seeds, with the shaded area indicating one standard deviation. The bar plot represents the total training time across all clients per round.

For CIFAR10 (upper plot), reducing the number of samples used for FOOF computation did not degrade performance, suggesting that even a small subset of data is sufficient to estimate the preconditioner effectively. In contrast, for CIFAR100 (lower plot), using the full local dataset achieved the highest accuracy, indicating that a larger number of samples contributes to a better approximation of second-order information. However, among the reduced sample sizes ($64$, $256$, and $1024$), the difference in accuracy was minimal, suggesting that a moderate reduction can significantly improve efficiency without a noticeable loss in performance.

These empirical findings indicate that the necessity of full dataset usage for FOOF computation depends on dataset complexity. While FedPM can maintain accuracy with fewer samples for simpler datasets like CIFAR10, leveraging the full dataset is more beneficial for complex datasets like CIFAR100.

\subsection{Profiling Results}
\label{secap:profiling}
We conducted empirical profiling of computational and communication costs. Table \ref{tab:profiling} summarizes the average per-round client training time, per-round communication time, and peak GPU memory usage for both CIFAR10 and CIFAR100 experiments.

\begin{table*}[t]
\centering
\begin{tabular}{lcccccc}
\toprule
& \multicolumn{3}{c}{\textbf{CIFAR10}} & \multicolumn{3}{c}{\textbf{CIFAR100}} \\
\cmidrule(lr){2-4} \cmidrule(lr){5-7}
\textbf{Method} & \begin{tabular}{@{}c@{}}Client Train\\Time (s/round)\end{tabular} & \begin{tabular}{@{}c@{}}Comm.\\Time (s/round)\end{tabular} & \begin{tabular}{@{}c@{}}Peak Memory\\(MB)\end{tabular} & \begin{tabular}{@{}c@{}}Client Train\\Time (s/round)\end{tabular} & \begin{tabular}{@{}c@{}}Comm.\\Time (s/round)\end{tabular} & \begin{tabular}{@{}c@{}}Peak Memory\\(MB)\end{tabular} \\
\midrule
FedAvg      & 48.27 & 5.53 & 23.80 & 55.47 & 14.03 & 450.71 \\
FedAvgM     & 49.48 & 5.52 & 24.05 & 60.53 & 13.98 & 497.01 \\
FedProx     & 53.01 & 5.49 & 23.80 & 84.47 & 13.95 & 529.62 \\
SCAFFOLD    & 49.44 & 11.06 & 24.56 & 60.52 & 27.82 & 620.92 \\
FedAdam     & 53.97 & 5.51 & 24.56 & 56.12 & 14.20 & 620.92 \\
LocalNewton & 60.75 & 5.06 & 24.31 & 90.35 & 14.13 & 542.44 \\
FedPM       & 59.53 & 8.32 & 37.07 & 89.57 & 20.99 & 542.44 \\
\bottomrule
\end{tabular}
\caption{Empirical profiling of average per-round client training time, communication time, and peak memory usage on CIFAR10 and CIFAR100. Measurements were averaged over all clients and rounds during the experiments for Figure \ref{fig:curve} and Figure \ref{fig:appendix_cifar100_curves}.}
\label{tab:profiling}
\end{table*}

\section{Limitations and Future Work}
\label{app:limitations}
The theoretical and empirical results presented in this paper establish FedPM as a robust second-order optimization method for Federated Learning. However, several avenues for future research remain to broaden its applicability and theoretical understanding.

\noindent
\textbf{Limitations in convergence analysis}: The current theoretical analysis of FedPM is restricted to scenarios involving a single local update ($K=1$) per communication round under which the global parameter update rule remains tractable. A primary direction for future work is to extend the convergence proof to accommodate multiple local updates ($K>1$). This requires addressing the analytical challenges posed by the drift in local preconditioners, which occurs as each client's model parameters diverge over local updates. Developing a theoretical framework that can bound the impact of this drift on the global convergence rate is essential for formally validating the use of FedPM in practical, communication-efficient scenarios. Furthermore, our analysis focuses on strongly convex models in line with most existing works \cite{safaryan2021fednl, elgabli2022fednew, li2024fedns}. As our experiments on DNNs demonstrate FedPM's empirical success in non-convex optimization, a crucial next step is to extend the theoretical analysis to non-convex settings.

\noindent
\textbf{Unexplored theoretical analysis of preconditioner approximations}: The paper outlines a practical preconditioner approximation (FOOF). However, the theoretical implications of this approximation, particularly their impact on convergence rates, have not been analyzed. Addressing this gap would enhance the theoretical guarantees and provide deeper insights into its efficacy.

\noindent
\textbf{Communication efficiency}: Although FedPM mitigates some communication inefficiencies, the additional transmission of preconditioners introduces communication overhead. Exploring strategies to further optimize communication without degrading convergence performance is a valuable direction for future work.

\end{document}